\theoremstyle{plain}
\newtheorem{theorem}{Theorem}[section]
\theoremstyle{definition}
\newtheorem{definition}[theorem]{Definition}
\newtheorem{assumption}[theorem]{Assumption}
\theoremstyle{remark}
\def\blankfootnote{\xdef\@thefnmark{}\@footnotetext}
\definecolor{royalblue(web)}{rgb}{0.25, 0.41, 0.88}
\definecolor{blue-violet}{rgb}{0.54, 0.17, 0.89}
\definecolor{brightmaroon}{rgb}{0.76, 0.13, 0.28}
\definecolor{darkmagenta}{rgb}{0.55, 0.0, 0.55}
\definecolor{bleudefrance}{rgb}{0.19, 0.55, 0.91}
\definecolor{palatinateblue}{rgb}{0.15, 0.23, 0.89}
\definecolor{royalblue(web)}{rgb}{0.25, 0.41, 0.88}
\definecolor{whitesmoke}{rgb}{0.96, 0.96, 0.96}
\definecolor{thulianpink}{rgb}{0.87, 0.44, 0.63}
\definecolor{amber(sae/ece)}{rgb}{1.0, 0.49, 0.0}
\definecolor{darkblue}{rgb}{0.0, 0.0, 0.55}
\definecolor{alizarin}{rgb}{0.82, 0.1, 0.26}
\definecolor{asparagus}{rgb}{0.53, 0.66, 0.42}
\definecolor{darkspringgreen}{rgb}{0.09, 0.45, 0.27}
\definecolor{columbiablue}{rgb}{0.61, 0.87, 1.0}
\definecolor{wildblueyonder}{rgb}{0.64, 0.68, 0.82}
\definecolor{trolleygrey}{rgb}{0.5, 0.5, 0.5}
\definecolor{paleaqua}{rgb}{0.74, 0.83, 0.9}
\definecolor{bubblegum}{rgb}{0.99, 0.76, 0.8}
\definecolor{coralred}{rgb}{1.0, 0.25, 0.25}
\definecolor{green(ryb)}{rgb}{0.4, 0.69, 0.2}
\definecolor{flame}{rgb}{0.89, 0.35, 0.13}
\definecolor{bittersweet}{rgb}{1.0, 0.44, 0.37}
\definecolor{darksalmon}{rgb}{0.91, 0.59, 0.48}
\definecolor{emerald}{rgb}{0.31, 0.78, 0.47}
\definecolor{green(pigment)}{rgb}{0.0, 0.65, 0.31}
\definecolor{codegreen}{rgb}{0,0.6,0}
\definecolor{codegray}{rgb}{0.5,0.5,0.5}
\definecolor{codepurple}{rgb}{0.58,0,0.82}
\definecolor{backcolour}{rgb}{0.96,0.96,0.94}
\definecolor{bluegray}{rgb}{0.3, 0.38, 0.47}
\definecolor{whitesmoke}{rgb}{0.96, 0.96, 0.96}
\definecolor{codegreen}{rgb}{0,0.6,0}
\definecolor{codegray}{rgb}{0.5,0.5,0.5}
\definecolor{codepurple}{rgb}{0.58,0,0.82}
\definecolor{backcolour}{rgb}{0.96,0.96,0.94}
\definecolor{darkred}{rgb}{0.8, 0, 0}
\newcommand{\gain}[1]{\textcolor{darkred}{\small{(+#1)}}}
\lstdefinestyle{mystyle}{
  basicstyle=\scriptsize\ttfamily,
  frame=single, 
  columns=fixed, 
}
\lstdefinestyle{newstyle}{
  basicstyle=\footnotesize\ttfamily\color{codegreen},
  backgroundcolor=\color{backcolour},
  frame=shadowbox, 
  rulecolor=\color{red},
  frameround=tttt, 
  keywordstyle=\color{magenta},
  commentstyle=\color{green},
  stringstyle=\color{red},
  showstringspaces=false,
  numbers=left,
  numberstyle=\tiny\color{gray},
  breaklines=true
}
\newcommand{\ours}{{\fontfamily{qpl}\selectfont ROSA}}
\newcommand{\parad}{{\fontfamily{qpl}\selectfont T$^2$PAM}}
\DeclareTextFontCommand{\textbf}{\fontfamily{qpl}\bfseries\selectfont}
\DeclareTextFontCommand{\textit}{\fontfamily{qpl}\itshape\selectfont}
\renewcommand{\eqref}[1]{(\ref{#1})}
\icmltitlerunning{Test-Time Policy Adaptation for Enhanced Multi-Turn Interactions with LLMs}
\begin{document}

\twocolumn[
  \icmltitle{Test-Time Policy Adaptation for Enhanced Multi-Turn Interactions with LLMs}



  \icmlsetsymbol{equal}{*}
\begin{icmlauthorlist}
\icmlauthor{Chenxing Wei}{szu,gml}
\icmlauthor{Hong Wang}{utsc}
\icmlauthor{Ying He}{szu}
\icmlauthor{F. Richard Yu}{carleton}
\icmlauthor{Yao Shu}{hkust}

\end{icmlauthorlist}

\icmlaffiliation{szu}{Shenzhen University}
\icmlaffiliation{hkust}{Hong Kong University of Science and Technology (Guangzhou)}
\icmlaffiliation{gml}{Guangdong Laboratory of Artificial Intelligence and Digital Economy (SZ)}
\icmlaffiliation{utsc}{University of Science and Technology of China}
\icmlaffiliation{carleton}{Carleton University}

\icmlcorrespondingauthor{Yao Shu}{yaoshu@hkust-gz.edu.cn}

  \icmlkeywords{Machine Learning, ICML}

  \vskip 0.3in
]



\printAffiliationsAndNotice{}  

\begin{abstract}
    Large Language Models (LLMs) employ multi-turn interaction as a fundamental paradigm for completing complex tasks. However, their performance often degrades in extended interactions, as they are typically trained on static, single-turn data, which hinders their ability to adapt to real-time user feedback. To address this limitation, we first propose a new paradigm: \textit{\underline{T}est-\underline{T}ime \underline{P}olicy \underline{A}daptation for \underline{M}ulti-Turn Interactions} (\parad{}), which utilizes user feedback from the ongoing interaction as a reward signal to estimate a latent optimal policy aligned with user preferences, then updates a small subset of parameters to steer the model toward this policy, ultimately enabling efficient in-conversation self-correction. We then introduce \textit{Optimum-\underline{R}eferenced \underline{O}ne-\underline{S}tep \underline{A}daptation} (\ours{}), a lightweight algorithm that operationalizes \parad{}. \ours{} guides the model parameters toward a theoretical optimal policy in a single, efficient update step, avoiding costly iterative gradient-based optimization and minimizing computational overhead. We provide a rigorous theoretical analysis guaranteeing that the policy of \ours{} converges to the preference of user as the number of interactions increases. Extensive experiments on challenging benchmark demonstrate that \ours{} achieves significant improvements in both effectiveness and efficiency.
\end{abstract}

\section{Introduction}
\label{sec:introduction}

\begin{figure}[t]
\vspace{-2mm}
\centering
\includegraphics[width=1.0\columnwidth]{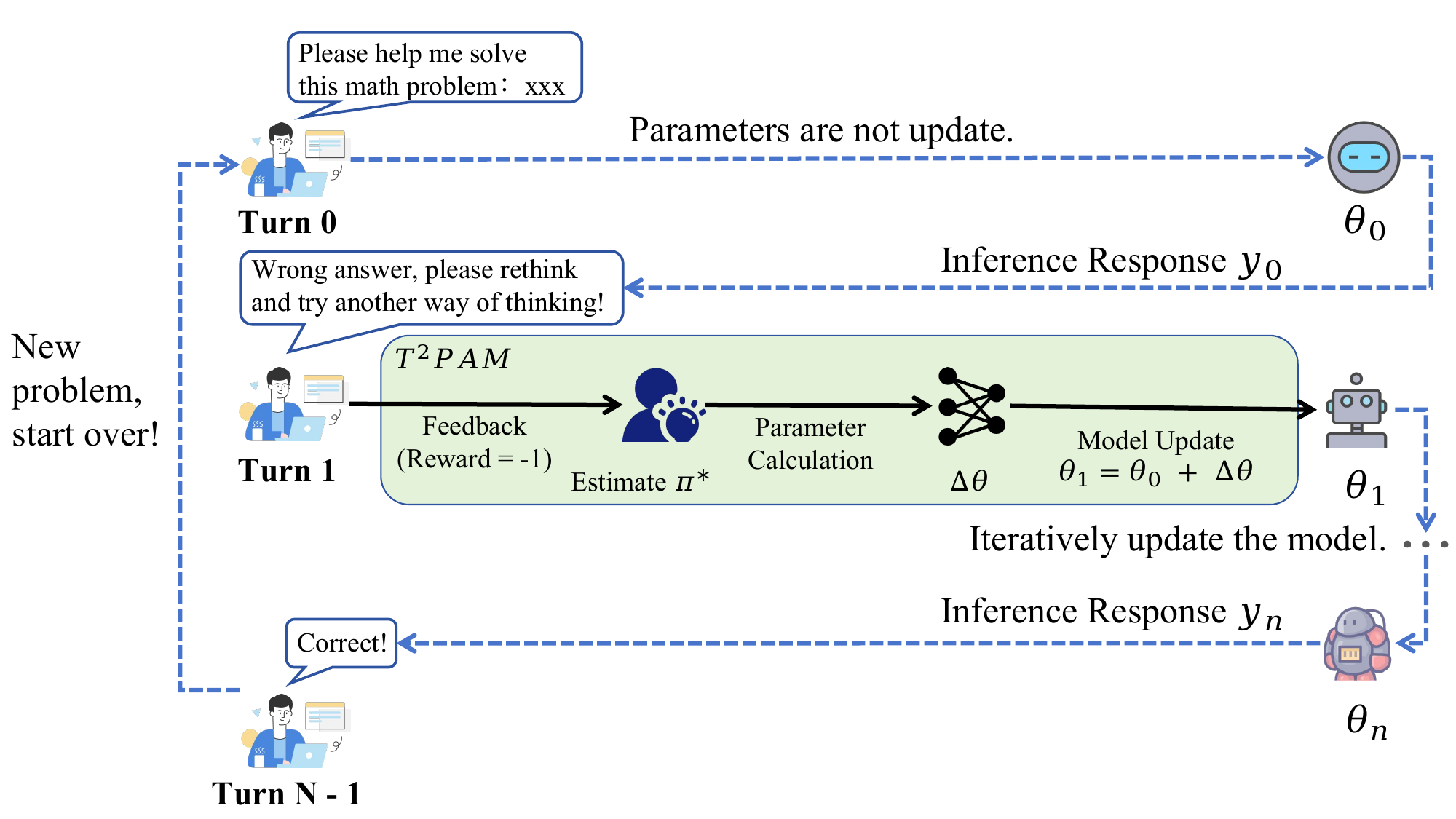}
\caption{An illustration of the \textit{Test-Time Policy Adaptation for Multi-Turn Interactions} (\parad{}) paradigm. Different from static inference where the policy of model remains fixed ($\theta_0$, Turn 0), this paradigm treats conversational feedback as an active signal that guides real-time parameter updates (e.g., from $\theta_0$ to $\theta_1$). This iterative process of in-conversation self-correction allows the policy to progressively evolve and align with the preference of user ($\theta_n$) throughout the interaction.}
\label{fig:overall}
\vspace{-4mm}
\end{figure}

Multi-turn conversation is the predominant interaction paradigm between human and  \textit{Large Language Models} (LLMs)~\citep{li2025singleturnsurveymultiturninteractions,yi2025surveyrecentadvancesllmbased}. This conversational modality is essential for real-world applications~\citep{zhang2025surveymultiturninteractioncapabilities}, as it enables users to progressively refine initially underspecified intentions into concrete objectives~\citep{overcoming_miscalibrated_conversational, zheng2023judging}, engaging the model in a collaborative problem-solving process~\citep{chen-etal-2023-chatcot}. However, a fundamental mismatch exists between this prevalent use case and existing LLM alignment methodologies~\citep{laban2025llmslostmultiturnconversation, van-miltenburg-etal-2025-measure}. Prevailing alignment methods, \textit{Supervised Fine-Tuning} (SFT)~\citep{sft, wei-etal-2025-flexora, lester-etal-2021-power} and \textit{Reinforcement Learning from Human Feedback} (RLHF)~\citep{ouyang2022traininglanguagemodelsfollow, wei2025redit}, predominantly rely on single-turn data for both training~\citep{shu2024ferret} and evaluation~\citep{chang2023surveyevaluationlargelanguage}. This paradigm misalignment not only limits the potential of the model in complex interactions~\citep{irvine2023rewardingchatbotsrealworldengagement, hendrycks2021what}, but also creates a significant gap between its benchmark performance and its practical utility~\citep{shinn2023reflexion, wu2024mathchat}. Consequently, while the combination of SFT for imparting extensive knowledge~\citep{chu2025sft, wei2025paftpromptagnosticfinetuning} and RLHF for aligning with human preferences~\citep{rafailov2023direct, meng2024simpo} endows models with strong single-turn capabilities~\citep{zeng2024evaluating}, these models often exhibit a pronounced degradation in performance during multi-turn interactions~\citep{wang2024mint}. In fact, previous work has highlighted that such models often perform poorly in multi-turn scenarios, resulting in diminished capabilities and increased instability~\citep{laban2025llmslostmultiturnconversation}. While multi-turn training strategies have been explored~\citep{shi2025wildfeedbackaligningllmsinsitu, qu2024recursive, chen2025learning}, they are frequently hindered by the prohibitive costs of collecting high-quality data and training on long context sequences~\citep{li2025singleturnsurveymultiturninteractions}.

To address these challenges, we propose a new paradigm: \textit{\underline{T}est-\underline{T}ime \underline{P}olicy \underline{A}daptation for \underline{M}ulti-Turn Interactions}(\parad{}), shifting the existing static training paradigm to a flexible test-time adaption paradigm. Specifically, this paradigm requires using a model trained in a single-turn interaction to perform effective and efficient online policy adaptation during multi-turn reasoning. This paradigm utilizes conversational user feedback as a reward signal to refine its policy and align its behavior with the underlying intent of user, as illustrated in Figure \ref{fig:overall}. Importantly, this adaptation process must be computationally lightweight, so as to remain imperceptible to the user without incurring unaffordable inference latency or GPU memory overhead. Under this new paradigm, a model should be able to dynamically instantiate a user-specific policy for each conversational context, thereby enhancing the effectiveness and reliability of the multi-turn interaction.

Unfortunately, existing methodologies~\citep{shani2024multiturn} are fundamentally misaligned with the requirements of \parad{}. Specifically, (1) \textit{Prompt Engineering}~\citep{hu2024-localized, chen-etal-2023-chatcot, shinn2023reflexion} as a form of in-context learning, which adjusts the policy of model via contextual prompts, often fails to achieve effective preference alignment within a few interaction turns. (2) \textit{Retrieval-Augmented Generation} (RAG)~\citep{gao2024retrievalaugmentedgenerationlargelanguage, RAG}, adapting the model output by lengthening the context, usually increases inference overhead significantly. Besides, its performance is determined by the quality and relevance of the external database. (3) \textit{Model Editing} (ME)~\citep{fang2025alphaedit, yao-etal-2023-editing} is able to address the context length issue of RAG by internalizing knowledge as fact tuples through direct parameter updates. However, this representation is structurally unsuitable for encoding fine-grained user preferences. (4) Finally, existing \textit{test-time methods}~\citep{li2025testtime, zuo2025ttrltesttimereinforcementlearning, hu2025slotsamplespecificlanguagemodel, liu2023design} are primarily designed for single-turn tasks and often rely on extensive inference-time sampling. This process introduces significant computational costs and latency. Detailed related work is provided in the Appendix~\ref{sec:related_work}.

To bridge this gap, we introduce \textit{Optimum-\underline{R}eferenced \underline{O}ne-\underline{S}tep \underline{A}daptation} (\ours{}), a lightweight online adaptation algorithm that operationalizes our proposed paradigm \parad{}. The core principle of \ours{} is to leverage user feedback to analytically compute an estimate of the optimal policy and then steer the model towards this target in a single, efficient update step. This approach avoids costly iterative optimization, enabling principled in-conversation self-correction with minimal computational overhead. Our main contributions are summarized as follows:
\begin{itemize}[topsep=0pt,leftmargin=6mm,itemsep=0pt]
\item We demonstrate that current LLMs underperform in multi-turn interactions and propose \parad{} paradigm to address this issue (Section ~\ref{sec:motivation}).
\item We propose \ours{}, the first practical algorithm to implement this paradigm, which updates model parameters and align user preferences quickly during multi-turn interactions (Section ~\ref{sec:method}).
\item We establish a solid theory for \ours{}, ensuring that its gap with user preferences narrows as the number of interaction turns increases (Section ~\ref{sec:theory}).
\item We conduct extensive experiments on multiple challenging datasets. Our results show that \ours{} outperforms baseline methods in both effectiveness and efficiency (Section ~\ref{sec:results}).
\end{itemize}

\begin{figure*}[t!]
    \centering
    \includegraphics[width=1.0\textwidth]{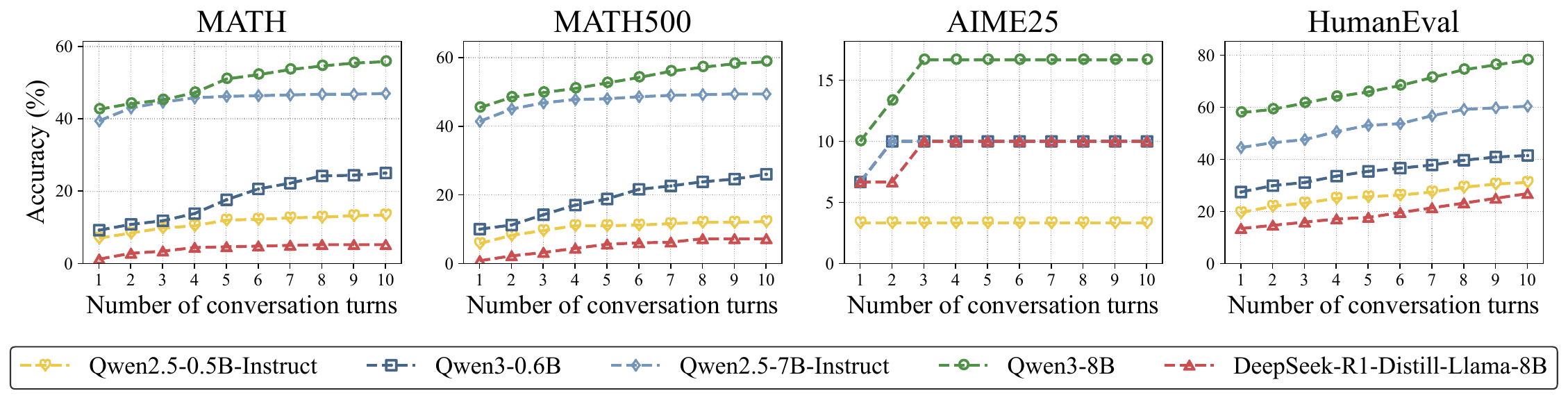} 
    \vspace{-5mm}
    \caption{
       LLM accuracy after 10 rounds of interaction with humans. Although LLM accuracy shows a continuous and gradual improvement, this prompt-based correction process is inefficient.
    }
    \label{fig:accuracy_line_chart}
    \vspace{-5mm}
\end{figure*}

\section{The \parad{} Paradigm}
\label{sec:motivation}

\begin{figure}[t!] 
    \vspace{-2mm} 
    \centering
    \includegraphics[width=0.43\textwidth]{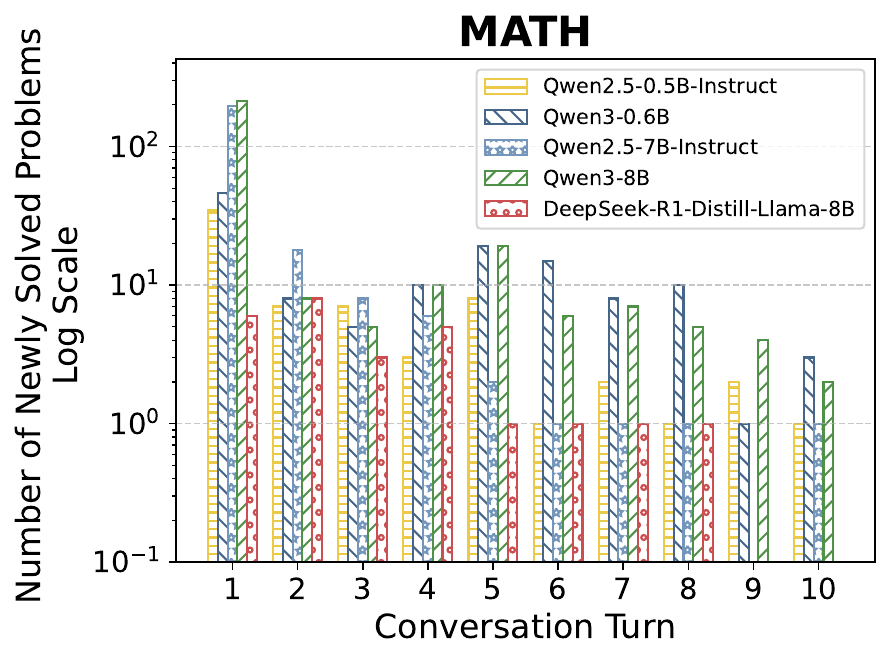} 
    \vspace{-2mm}
    \caption{
        Number of \textit{newly} solved problems per turn on the \texttt{MATH} dataset. 
    }
    \label{fig:newly_solved_bar_chart}
    \vspace{-3mm}
\end{figure}

The performance of LLMs often degrades in multi-turn interactions, because their alignment on static, single-turn datasets creates a paradigm mismatch that hinders their ability to adapt to user feedback or correct initial errors~\citep{laban2025llmslostmultiturnconversation}. To show this inefficiency, we empirically evaluated several LLMs on reasoning tasks. We first plot the cumulative accuracy over 10 conversational turns where human-like prompts were provided after each incorrect attempt. The results in Figure~\ref{fig:accuracy_line_chart} show that while multi-turn interaction gradually improves accuracy, the process exhibits sharply diminishing returns. To diagnose this, Figure~\ref{fig:newly_solved_bar_chart} plots the number of \textit{newly} solved problems at each conversational turn on the \texttt{MATH} dataset. The data reveal that the vast majority of problems are solved on the first attempt, with very few successful corrections in subsequent turns. This demonstrates that current models treat user interactions as passive context rather than as active signals for policy correction, highlighting a critical gap in their ability to perform efficient test-time adaptation.

To address this gap, we propose a new paradigm: \textit{test-time policy adaptation for multi-turn interactions} (\parad{}). As summarized in Table~\ref{tab:paradigm_comparison}, \parad{} resolves a trade-off faced by traditional approaches. While prompt-based lacks real-time adaptability and multi-turn training is costly and results in a static policy, \parad{} synthesizes the benefits of both. It operates during inference with zero training cost but, through online parameter modification, achieves high, policy-level adaptability that is more direct than prompting and more flexible than offline training. Notably, this paradigm shifts model alignment from a static, offline training stage to a dynamic, online inference process. More specifically, it requires methods that can update the policy of model in real-time by directly leveraging the rich feedback signals from a live conversation. We formally define \parad{} as below:
\begin{tcolorbox}[colback=black!4!white,colframe=black!80!white,title=\textbf{Paradigm: Test-Time Policy Adaptation for Multi-Turn Interactions (\parad{})},breakable,left=2mm, right=2mm, top=2mm, bottom=2mm]
{\fontfamily{qpl}\selectfont
Let a inference-time multi-turn interaction be a sequence of interactions indexed by turn $k \in \{1, \dots\}$. At the beginning of turn $k$, the language model is defined by a policy $\pi_{\theta_{k-1}}$ with parameters $\theta_{k-1}$. The paradigm proceeds as follows:
\begin{enumerate}[leftmargin=*, topsep=2pt, itemsep=1pt]
    \item \textbf{Generation}: The model generates a response $\mathbf{y}_{k} \sim \pi_{\theta_{k-1}}(\cdot|\mathbf{x})$ given the conversational context $\mathbf{x}$.
    \item \textbf{Feedback}: The subsequent interaction of user provides feedback, which is mapped to a scalar reward $r_k$ indicating task success (i.e., $r_k=+1$) or failure (i.e., $r_k=-1$).
    \item \textbf{Adaptation}: If the task succeeds (i.e., $r_k=+1$), the multi-turn interaction is finished. Otherwise, an \textbf{effective and efficient} online adaptation function $\mathcal{A}$ updates the model parameters at inference time based on this failure feedback (i.e., $r_k=-1$) such that the model is more likely to succeed in the next turn:
    $$
        \theta_k = \mathcal{A}(\theta_{k-1}, r_k, \mathbf{y}_{k}; \mathbf{x}) = \theta_{k-1} + \Delta\theta_k \ .
    $$
\end{enumerate}
}
\end{tcolorbox}

\begin{table*}[ht]
\centering
\vspace{-2mm}
\caption{Conceptual comparison of paradigms for improving multi-turn LLM performance.}
\label{tab:paradigm_comparison}
\resizebox{\textwidth}{!}{
\begin{tabular}{lccc}
\toprule
\textbf{Feature} & \textbf{Prompt-based Methods} & \textbf{Multi-turn Data Training} & \textbf{T²PAM (Ours)} \\
\midrule
\textbf{Intervention Timing} & During inference & During training & During inference \\
\textbf{Operating Mode} & Reactive & \textbf{Proactive} (at training) & \textbf{Proactive} (at inference) \\
\textbf{Training Cost} & \textbf{Zero} (uses single-turn model) & High & \textbf{Zero} (uses single-turn model) \\
\textbf{Inference Cost} & Low (long context) & \textbf{Near-zero} & Low (preference alignment) \\
\textbf{Parameter Modification} & No & Yes (offline) & Yes (online) \\
\textbf{Real-time Adaptability} & Low (context-dependent) & Low (static policy) & \textbf{High} (policy-level) \\
\bottomrule
\end{tabular}
}
\end{table*}

\section{Optimum-Referenced One-Step Adaptation}
\label{sec:method}
To solve the paradigm we proposed above, we develop the \textit{Optimum-Referenced One-Step Adaptation} (\ours{}) approach (Algorithm~\ref{alg:rosa}), which enables effective and efficient online adaptation of a language model policy in direct response to real-time user feedback during multi-turn interactions. The core principle is to guide the model parameters towards a theoretical optimum in a single, efficient update step, avoiding iterative gradient-based optimization. This approach first defines the \textit{Reinforcement Learning from Human Feedback} (RLHF) objective (Section \ref{sec:rlhf_objective}) to maximize reward with KL regularization. It then leverages a closed-form analytical solution to directly identify the optimal policy (Section \ref{sec:optimal_policy}), applying exponential re-weighting to observed responses for practical one-step updates. Finally, parameter updates are efficiently computed via linearized optimization using the \textit{Conjugate Gradient} algorithm (Section \ref{sec:parameter_update}).


\subsection{The RLHF Objective for Turn-Wise Adaptation}
\label{sec:rlhf_objective}
We propose to solve the \parad{} paradigm above using \textit{Reinforcement Learning from Human Feedback} (RLHF) techniques~\citep{ouyang2022traininglanguagemodelsfollow}. In this approach, we learn from a reward signal $r(\mathbf{x}, \mathbf{y})$ that reflects human preference given the context $\mathbf{x}$ and the response $\mathbf{y}$. Specifically, we model this feedback as a binary signal where $r(\mathbf{x}, \mathbf{y}) \in \{-1, +1\}$ corresponds to negative and positive feedback, respectively. The objective is to find an updated policy $\pi_\theta$ that maximizes the expected reward while penalizing significant divergence from the policy of the previous turn $\pi_{\theta_{k-1}}$ for stable and controlled updates. The deviation is measured by the \textit{Kullback-Leibler} (KL) divergence. This leads to the following turn-wise optimization objective for turn $k$:
\begin{equation}
\label{eq:rlhf_objective}
\max_{\pi_{\theta}} \quad \mathbb{E}_{\mathbf{y} \sim \pi_{\theta}(\cdot|\mathbf{x})} \left[ r(\mathbf{x}, \mathbf{y}) \right] - \beta D_{\text{KL}} \left( \pi_\theta(\cdot|\mathbf{x}) \,\|\, \pi_{\theta_{k-1}}(\cdot|\mathbf{x}) \right) 
\end{equation}

where $\beta > 0$ is a coefficient that controls the strength of the KL regularization.

\subsection{From Theoretical Optimum to a Practical One-Step Update}
\label{sec:optimal_policy}

While the objective presented in ~\eqref{eq:rlhf_objective} is conventionally optimized using iterative gradient-based methods \citep{sgd,kingma2017adammethodstochasticoptimization}, such approaches are often characterized by their computational intensity and slow convergence, rendering them impractical for real-time online adaptation scenarios. Our methodology circumvents this inefficiency by leveraging a critical insight: this specific optimization problem admits a well-established closed-form analytical solution~\citep{rafailov2023direct}. Rather than relying on incremental approximations, we can directly ascertain the optimal policy. This foundational result is formalized in Theorem~\ref{thm:optimal_policy} (proof in Appendix~\ref{sec:optima_policy}).
\begin{theorem}[Closed-Form Optimal Policy]
\label{thm:optimal_policy}
\textit{\fontfamily{ppl}\selectfont
Let $Z_k(\mathbf{x}) = \sum_{\mathbf{y}' \in \mathcal{Y}} \pi_{\theta_{k-1}}(\mathbf{y}'|\mathbf{x}) \exp\left(\frac{1}{\beta} r(\mathbf{x}, \mathbf{y}')\right)$ be the partition function over the entire response space $\mathcal{Y}$, the policy $\pi^*_{\theta_k}$ that maximizes the turn-wise RLHF objective in ~\eqref{eq:rlhf_objective} is given by:
\begin{equation}
\label{eq:optimal_policy}
\pi^*_{\theta_k}(\mathbf{y}|\mathbf{x}) = \frac{1}{Z_k(\mathbf{x})} \pi_{\theta_{k-1}}(\mathbf{y}|\mathbf{x}) \exp\left(\frac{1}{\beta} r(\mathbf{x}, \mathbf{y})\right) \ .
\end{equation}}
\end{theorem}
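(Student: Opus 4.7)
The plan is to convert the constrained maximization in~\eqref{eq:rlhf_objective} into the minimization of a KL divergence against an explicitly constructed target distribution, whose form then forces the optimum to match~\eqref{eq:optimal_policy}. First I would divide the objective by $\beta>0$ (which preserves the $\arg\max$) and expand both the expectation and the KL term as sums over $\mathbf{y}\in\mathcal{Y}$, restricting attention to policies absolutely continuous with respect to $\pi_{\theta_{k-1}}(\cdot|\mathbf{x})$, since otherwise the KL term is $+\infty$ and such $\pi_\theta$ are suboptimal by default. This reduces~\eqref{eq:rlhf_objective} to maximizing $\sum_{\mathbf{y}} \pi_\theta(\mathbf{y}|\mathbf{x})\bigl[\tfrac{1}{\beta} r(\mathbf{x},\mathbf{y}) - \log\tfrac{\pi_\theta(\mathbf{y}|\mathbf{x})}{\pi_{\theta_{k-1}}(\mathbf{y}|\mathbf{x})}\bigr]$ over the probability simplex.

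Next I would introduce the unnormalized measure $\tilde{\pi}(\mathbf{y}|\mathbf{x}) := \pi_{\theta_{k-1}}(\mathbf{y}|\mathbf{x})\exp\bigl(\tfrac{1}{\beta}r(\mathbf{x},\mathbf{y})\bigr)$ and its normalizer $Z_k(\mathbf{x})$ as in the statement. By rewriting $\tfrac{1}{\beta}r(\mathbf{x},\mathbf{y}) = \log\exp(r(\mathbf{x},\mathbf{y})/\beta)$ and adding and subtracting $\log Z_k(\mathbf{x})$ inside the sum, the objective rearranges algebraically into
\[
-D_{\text{KL}}\!\left(\pi_\theta(\cdot|\mathbf{x})\,\Big\|\,\tfrac{\tilde{\pi}(\cdot|\mathbf{x})}{Z_k(\mathbf{x})}\right) + \log Z_k(\mathbf{x}).
\]
Since $\log Z_k(\mathbf{x})$ is independent of $\pi_\theta$, the maximization reduces to minimizing a KL divergence against the valid probability distribution $\tilde{\pi}(\cdot|\mathbf{x})/Z_k(\mathbf{x})$.

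Finally, Gibbs' inequality gives $D_{\text{KL}}(p\|q)\geq 0$ with equality iff $p=q$, so the unique maximizer on the simplex is $\pi^*_{\theta_k}(\mathbf{y}|\mathbf{x}) = \tilde{\pi}(\mathbf{y}|\mathbf{x})/Z_k(\mathbf{x})$, which is exactly~\eqref{eq:optimal_policy}. The main obstacle I anticipate is technical bookkeeping rather than a conceptual hurdle: I need to ensure $Z_k(\mathbf{x})<\infty$ (which is immediate from $r\in\{-1,+1\}$, yielding $Z_k(\mathbf{x})\in[e^{-1/\beta},e^{1/\beta}]$), justify the algebraic rearrangement on the restricted domain described above, and verify that the resulting $\pi^*_{\theta_k}$ is a valid distribution (non-negativity from the exponential times a probability, and normalization by construction through $Z_k$). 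As a cross-check, I would also sketch the Lagrangian derivation: attaching a multiplier $\lambda$ to the constraint $\sum_{\mathbf{y}}\pi_\theta(\mathbf{y}|\mathbf{x})=1$ and setting $\partial/\partial\pi_\theta(\mathbf{y}|\mathbf{x})$ to zero yields the same exponential form, with $\lambda$ determined by normalization.
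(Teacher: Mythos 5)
Your proposal is correct and follows essentially the same route as the paper's proof: both rescale by $\beta$, absorb the reward into an exponentially tilted reference distribution normalized by $Z_k(\mathbf{x})$, observe that $\log Z_k(\mathbf{x})$ does not depend on $\pi_\theta$, and conclude via non-negativity of the KL divergence (Gibbs' inequality) that the unique maximizer is the tilted distribution in~\eqref{eq:optimal_policy}. The extra bookkeeping you mention (finiteness of $Z_k$, absolute continuity, and the Lagrangian cross-check) is sound but not needed beyond what the paper already does.
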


Theorem~\ref{thm:optimal_policy} demonstrates that the optimal policy is a re-weighted version of the reference policy, where the probability of a given response is exponentially modulated by its associated reward. In practical applications, feedback is typically received for only a \textit{single} generated response, $\mathbf{y}_k$, often corresponding to a negative reward ($r_k = -1$) for an incorrect output. This constraint necessitates the construction of an update target utilizing solely the observed data point $(\mathbf{x}, \mathbf{y}_k, r_k)$. We achieve this by applying the exponential re-weighting derived from the optimal policy in~\eqref{eq:optimal_policy} exclusively to the observed response, thereby yielding a practical target value (derivation in Appendix~\ref{sec:derivation_practica_target}):
\begin{equation}
\label{eq:practical_target}
\tilde{\pi}^*_{\theta_k}(\mathbf{y}|\mathbf{x}) =
\begin{cases}
\frac{1}{Z_k(\mathbf{x})}\pi_{\theta_{k-1}}(\mathbf{y}|\mathbf{x}) \exp\!\left(\tfrac{1}{\beta} r_k\right), & \text{if } \mathbf{y} = \mathbf{y}_k, \\[1.2ex]
\frac{1}{Z_k(\mathbf{x})}\pi_{\theta_{k-1}}(\mathbf{y}|\mathbf{x}), & \text{if } \mathbf{y} \neq \mathbf{y}_k \:.
\end{cases}
\end{equation}
where $Z_k(\mathbf{x}) = 1 - \left(1 - \exp\!\left(\tfrac{1}{\beta} r_k\right)\right)\pi_{\theta_{k-1}}(\mathbf{y}_k|\mathbf{x})$.
This formulation provides a direct learning signal for a one-step parameter update. For an incorrect response with reward $r_k=-1$, the target probability is scaled down relative to the current policy, effectively instructing the model to diminish the likelihood of generating that specific erroneous output in the future. This approach transforms an otherwise intractable global optimization problem into a targeted, sample-wise correction, forming the fundamental basis for our efficient adaptation mechanism.

\begin{algorithm}[t]
\caption{Optimum-Referenced One-Step Adaptation (\ours{})}
\label{alg:rosa}
\begin{algorithmic}[1]
\STATE \textbf{Input:} Initial model parameters $\theta_0$, hyperparameter $\beta$.
\STATE $k \leftarrow 1$
\WHILE{\textbf{true}} 
    \STATE \COMMENT{Step 1: Generate response and receive feedback}
    \STATE Generate response $\mathbf{y}_k \sim \pi_{\theta_{k-1}}(\cdot|\mathbf{x})$.
    \STATE Receive reward $r_k$ based on user feedback.
    \IF{$r_k = +1$}
        \STATE \textbf{Terminate }\COMMENT{Stop immediately on success signal}
    \ENDIF
    \STATE \COMMENT{Step 2: Construct the practical online target (Section~\ref{sec:optimal_policy})}
    \STATE Compute target value $\tilde{\pi}^*_{\theta_k}(\mathbf{y}_k|\mathbf{x}) = \frac{1}{Z_k(\mathbf{x})}\pi_{\theta_{k-1}}(\mathbf{y}_k|\mathbf{x}) \exp\left(\frac{1}{\beta} r_k\right)$.
    
    \STATE \COMMENT{Step 3: Compute parameter update via linearized optimization (Section~\ref{sec:parameter_update})}
    \STATE Define residual $\mathbf{d}_k = \tilde{\pi}^*_{\theta_k} - \pi_{\theta_{k-1}}$.
    \STATE Solve $(\mathbf{J}_k^\top \mathbf{J}_k) \Delta\theta_k = \mathbf{J}_k^\top \mathbf{d}_k$ for $\Delta\theta_k$ using \textit{Conjugate Gradient} method.
    
    \STATE \COMMENT{Step 4: Update model parameters}
    \STATE Update parameters: $\theta_k \leftarrow \theta_{k-1} + \Delta\theta_k$.
\ENDWHILE
\end{algorithmic}
\end{algorithm}
\vspace{-3mm}

\subsection{Efficient Parameter Update via Linearized Optimization}
\label{sec:parameter_update}
With a practical target policy $\tilde{\pi}^*_{\theta_k}$ established, the subsequent step involves computing the parameter update $\Delta\theta_k$ that adjusts the current policy $\pi_{\theta_{k-1}}$ towards this target. This is accomplished through linearized optimization. This linearization is chosen for its computational ease and efficiency, allowing for rapid online adaptation without the prohibitive costs of higher-order optimization methods, as demonstrated in our efficiency analysis in Section~\ref{sec:exp-efficiency}. Initially, the policy function is approximated using a first-order Taylor expansion around the current parameters $\theta_{k-1}$:
\begin{equation}
\label{eq:taylor_expansion}
\pi_{\theta_{k-1} + \Delta\theta_k}(\mathbf{y}_k|\mathbf{x}) \approx \pi_{\theta_{k-1}}(\mathbf{y}_k|\mathbf{x}) + \nabla_\theta \pi_{\theta_{k-1}}(\mathbf{y}_k|\mathbf{x})^\top \Delta\theta_k \:.
\end{equation}
Our objective is to determine $\Delta\theta_k$ such that the updated policy $\pi_{\theta_{k-1} + \Delta\theta_k}$ closely matches our target $\tilde{\pi}^*_{\theta_k}$. For the single data point $(\mathbf{x}, \mathbf{y}_k)$, this yields a linear system of equations:
\begin{equation}
\label{eq:linear_system}
\mathbf{J}_k \Delta\theta_k \approx \tilde{\pi}^*_{\theta_k}(\mathbf{y}_k|\mathbf{x}) - \pi_{\theta_{k-1}}(\mathbf{y}_k|\mathbf{x}) \:.
\end{equation}
where $\mathbf{J}_k = \nabla_\theta \pi_{\theta_{k-1}}(\mathbf{y}_k|\mathbf{x})^\top$ represents the Jacobian of the policy output with respect to the model parameters. To obtain a stable, least-squares solution for $\Delta\theta_k$, we solve the following equations:
\begin{equation}
\label{eq:normal_equations}
(\mathbf{J}_k^\top \mathbf{J}_k) \Delta\theta_k = \mathbf{J}_k^\top \left( \tilde{\pi}^*_{\theta_k}(\mathbf{y}_k|\mathbf{x}) - \pi_{\theta_{k-1}}(\mathbf{y}_k|\mathbf{x}) \right) \:.
\end{equation}
Explicitly forming the Hessian-approximating matrix $\mathbf{J}_k^\top \mathbf{J}_k$ is computationally prohibitive for models with a large number of parameters. As a consequence, we employ the \textit{Conjugate Gradient (CG) algorithm}~\citep{Atkinson1988}, an iterative solver that efficiently determines the solution to~\eqref{eq:normal_equations} without materializing this matrix. This is critical for memory efficiency, as it avoids storing the full Hessian-like matrix, making our approach incur less GPU memory overhead, as shown in Appendix~\ref{sec:exp-efficiency}. The CG algorithm only requires the computation of the matrix-vector product $(\mathbf{J}_k^\top \mathbf{J}_k)\mathbf{p}$ for an arbitrary vector $\mathbf{p}$. This computation is performed in a matrix-free manner by efficiently chaining two operations using automatic differentiation: a Jacobian-vector product (JVP) to compute $\mathbf{J}_k\mathbf{p}$, followed by a vector-Jacobian product (VJP) to compute $\mathbf{J}_k^\top(\mathbf{J}_k\mathbf{p})$.

Once the optimal $\Delta\theta_k$ is computed via CG method, the model parameters are updated in one step:
\begin{equation}
\label{eq:final_update}
\theta_k \leftarrow \theta_{k-1} + \Delta\theta_k \:.
\end{equation}
This entire procedure, encompassing feedback reception and parameter update computation, constitutes one complete cycle of \ours{}, as comprehensively detailed in Algorithm~\ref{alg:rosa}.

\section{Theoretical Results}
\label{sec:theory}

Having established the mechanics of \ours{}, we now provide its theoretical underpinnings. This section demonstrates that our \ours{} is not merely an effective heuristic but a principled algorithm with formal guarantees. Our analysis unfolds in three stages: we first prove that each corrective step is guaranteed to be productive (Section~\ref{sec:theory-1}), then show that these gains accumulate over time to ensure convergence (Section~\ref{sec:theory-2}), and finally, provide a unified bound that accounts for the practical approximation errors inherent in our efficient update step (Section~\ref{sec:theory-3}).

Of note, a central aspect of our theoretical analysis revolves around the \textit{Kullback-Leibler} (KL) divergence, specifically $D_{\text{KL}}(\pi_{\text{user}}^* \| \tilde{\pi}^*_{\theta_k})$. This metric quantifies the dissimilarity between the underlying user optimal policy $\pi_{\text{user}}^*$ (representing the true preferences from the user and the ideal way to solve the task) and our adapted policy $\tilde{\pi}^*_{\theta_k}$. Minimizing this divergence is crucial because it directly implies that the generated responses from a model are becoming increasingly aligned with what the user desires and expects. When the model policy closely mirrors the user optimal policy, it is inherently more likely to produce correct and satisfactory outputs, thereby increasing the probability of task success and reducing the number of interaction turns required to achieve user intent. 

\begin{figure*}[t]
    \centering    \includegraphics[width=1.0\textwidth]{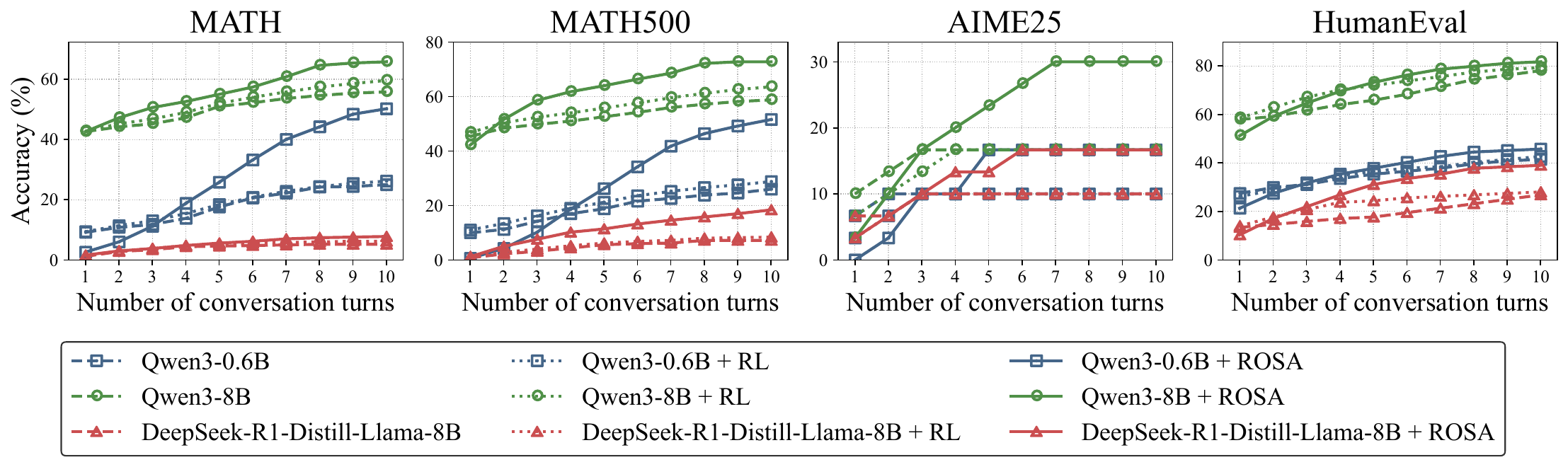} 
    \vspace{-5mm}
    \caption{
        \ours{} significantly boosts the rate of accuracy improvement in multi-turn interactions. These charts compare \textit{baseline models}, \textit{RL} described in Appendix \ref{sec:ablation_strategy}, and \textit{\ours{}} on different datasets. In contrast to the slow improvement shown in Figure~\ref{fig:accuracy_line_chart}, \ours{} not only achieves a higher absolute accuracy but also accelerates the learning process, as evidenced by the steeper slopes of the solid lines. This highlights efficiency of \ours{} in online error correction.
    }
    \label{fig:result_1}
    \vspace{-3mm}
\end{figure*}

\subsection{Monotonic Error Reduction}\label{sec:theory-1}

Our first key result establishes that the adaptation mechanism in \ours{} is \textit{provably productive}. Each time the model receives corrective feedback, the resulting update is guaranteed to reduce the KL divergence between the underlying user policy and our estimated target policy, as formally shown in Theorem~\ref{thm:monotonic_error} (proof in Appendix~\ref{sec:monotonic_error_reduction}).
\begin{theorem}[Monotonic Error Reduction]
\label{thm:monotonic_error}
\textit{\fontfamily{ppl}\selectfont
Let $\pi_{\normalfont\text{user}}^*$ be the underlying user policy and $\tilde{\pi}^*_{\theta_k}$ be the practical target policy in \eqref{eq:practical_target} after receiving feedback $r_k$ on response $\mathbf{y}_k$ at turn $k$. Suppose $\pi_{\theta_k} = \tilde{\pi}^*_{\theta_k}$ by applying exact policy update in \ours{}, the change in KL divergence from the previous turn is bounded as follows:
\begin{equation}
\label{eq:monotonic_error}
\normalfont
D_{\text{KL}}(\pi_{\text{user}}^* \| \tilde{\pi}^*_{\theta_k}) - D_{\text{KL}}(\pi_{\text{user}}^* \| \tilde{\pi}^*_{\theta_{k-1}}) \leq -\frac{1}{\beta}\pi_{\text{user}}^*(\mathbf{y}_k | \mathbf{x}) \ .
\end{equation}
}
\end{theorem}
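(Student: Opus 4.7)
The plan is to derive Eq.~\eqref{eq:monotonic_error} by a direct telescoping calculation that exploits the explicit piecewise form of $\tilde{\pi}^*_{\theta_k}$ in Eq.~\eqref{eq:practical_target}, followed by an elementary bound on the log-partition term.

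First I would use the identity $D_{\text{KL}}(p\|q) - D_{\text{KL}}(p\|q') = \mathbb{E}_{\mathbf{y}\sim p}[\log(q'(\mathbf{y})/q(\mathbf{y}))]$, together with the hypothesis $\tilde{\pi}^*_{\theta_{k-1}} = \pi_{\theta_{k-1}}$ inherited from the exact update at the previous turn, to rewrite the left-hand side of Eq.~\eqref{eq:monotonic_error} as $\sum_{\mathbf{y}} \pi_{\text{user}}^*(\mathbf{y}|\mathbf{x}) \log\bigl(\pi_{\theta_{k-1}}(\mathbf{y}|\mathbf{x})/\tilde{\pi}^*_{\theta_k}(\mathbf{y}|\mathbf{x})\bigr)$. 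The two KL divergences share the $-H(\pi_{\text{user}}^*)$ entropy term, so the difference reduces cleanly to this single expectation.

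Next I would substitute the piecewise definition from Eq.~\eqref{eq:practical_target}: the log-ratio equals $\log Z_k(\mathbf{x}) - r_k/\beta$ at $\mathbf{y}=\mathbf{y}_k$ and $\log Z_k(\mathbf{x})$ elsewhere. Because $\sum_{\mathbf{y}}\pi_{\text{user}}^*(\mathbf{y}|\mathbf{x}) = 1$, the sum collapses to the compact form $\log Z_k(\mathbf{x}) - (r_k/\beta)\,\pi_{\text{user}}^*(\mathbf{y}_k|\mathbf{x})$. Since Algorithm~\ref{alg:rosa} only invokes an update when $r_k = -1$, I would then substitute this reward and use the explicit expression $Z_k(\mathbf{x}) = 1 - (1-e^{-1/\beta})\pi_{\theta_{k-1}}(\mathbf{y}_k|\mathbf{x})$ together with the elementary inequality $\log(1-u)\le -u$ to bound $\log Z_k(\mathbf{x})$ from above by a negative multiple of $\pi_{\theta_{k-1}}(\mathbf{y}_k|\mathbf{x})$ scaling as $1/\beta$. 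Combining this bound with the $+(1/\beta)\,\pi_{\text{user}}^*(\mathbf{y}_k|\mathbf{x})$ term from the previous step yields the stated bound $-\pi_{\text{user}}^*(\mathbf{y}_k|\mathbf{x})/\beta$.

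The main obstacle will be this last step: the natural bound on $\log Z_k$ features $\pi_{\theta_{k-1}}(\mathbf{y}_k|\mathbf{x})$, the probability of the sampled failure under the current model, whereas the target right-hand side features the user-policy mass $\pi_{\text{user}}^*(\mathbf{y}_k|\mathbf{x})$. The cleanest way to close this gap is to exploit the semantic fact that $\mathbf{y}_k$ is by definition an unwanted response under $\pi_{\text{user}}^*$, so its mass is small relative to the sampling mass $\pi_{\theta_{k-1}}(\mathbf{y}_k|\mathbf{x})$ needed to generate it; making this precise may require an auxiliary coverage-type inequality, and this is the step where I would expect the appendix proof to introduce its key technical ingredient. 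Every other step is routine algebraic manipulation of densities and logs.
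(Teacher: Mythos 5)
Your first two stages reproduce the paper's derivation exactly: the appendix proof likewise writes the change in KL as $\sum_{\mathbf{y}}\pi_{\text{user}}^*(\mathbf{y}|\mathbf{x})\log\bigl(\tilde{\pi}^*_{\theta_{k-1}}(\mathbf{y}|\mathbf{x})/\tilde{\pi}^*_{\theta_k}(\mathbf{y}|\mathbf{x})\bigr)$, substitutes the piecewise form of \eqref{eq:practical_target} to get the log-ratio $\log Z_k(\mathbf{x})-\frac{r_k}{\beta}\mathbb{I}(\mathbf{y}=\mathbf{y}_k)$, and collapses the sum to $\log Z_k(\mathbf{x})-\frac{r_k}{\beta}\pi_{\text{user}}^*(\mathbf{y}_k|\mathbf{x})$. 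The divergence, and the genuine gap, is in your final step. Setting $r_k=-1$ (as Algorithm~1 indeed suggests) makes the reward term $+\frac{1}{\beta}\pi_{\text{user}}^*(\mathbf{y}_k|\mathbf{x})$, i.e.\ it enters with the \emph{wrong} sign, so you must extract a decrease of at least $\frac{2}{\beta}\pi_{\text{user}}^*(\mathbf{y}_k|\mathbf{x})$ from $\log Z_k(\mathbf{x})$ alone. Your bound $\log(1-u)\le -u$ gives $\log Z_k(\mathbf{x})\le -(1-e^{-1/\beta})\,\pi_{\theta_{k-1}}(\mathbf{y}_k|\mathbf{x})$, so you would need $(1-e^{-1/\beta})\,\pi_{\theta_{k-1}}(\mathbf{y}_k|\mathbf{x})\ \ge\ \frac{2}{\beta}\,\pi_{\text{user}}^*(\mathbf{y}_k|\mathbf{x})$. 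No such coverage relation is assumed anywhere in the paper, and the inequality is false in general: since $1-e^{-1/\beta}<\frac{1}{\beta}$, it fails even in the favorable case $\pi_{\theta_{k-1}}(\mathbf{y}_k|\mathbf{x})=\pi_{\text{user}}^*(\mathbf{y}_k|\mathbf{x})$, and your own semantic argument (that a failed $\mathbf{y}_k$ has small user mass) only helps when $\pi_{\text{user}}^*(\mathbf{y}_k|\mathbf{x})$ is essentially zero, in which case the claimed strict improvement term is vacuous anyway. So the "auxiliary coverage-type inequality" you hoped the appendix would supply does not exist, and your route as written cannot close.

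For comparison, the paper closes the argument by taking the opposite sign convention at exactly this point: it asserts that $\mathbf{y}_k$ is drawn from the user's target distribution so that $r_k=+1$, whereupon the term $-\frac{r_k}{\beta}\pi_{\text{user}}^*(\mathbf{y}_k|\mathbf{x})$ directly supplies the $-\frac{1}{\beta}\pi_{\text{user}}^*(\mathbf{y}_k|\mathbf{x})$ on the right-hand side, and it discards $\log Z_k(\mathbf{x})$ by claiming $Z_k(\mathbf{x})\le 1$. In other words, the improvement in the paper's proof comes from the reward term, not from the partition function, and no relation between $\pi_{\theta_{k-1}}(\mathbf{y}_k|\mathbf{x})$ and $\pi_{\text{user}}^*(\mathbf{y}_k|\mathbf{x})$ is ever needed. (You should be aware that this resolution sits uneasily with the algorithm you correctly read off from the paper: $Z_k(\mathbf{x})\le 1$ holds for $r_k=-1$ but not for $r_k=+1$, where $Z_k(\mathbf{x})\ge 1$, so the paper's two ingredients correspond to different sign cases; your instinct to work with $r_k=-1$ was reasonable, but it is not the case the paper's proof actually handles, and under that case the stated bound does not follow from the stated assumptions.)
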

\noindent\textbf{Remark.} This theorem provides a powerful guarantee for the reliability of \ours{}. The most inspiring insight is that \textit{every piece of corrective feedback is guaranteed to be productive}, confirming that learning from failure is a mathematically valid mechanism in our framework. The magnitude of this reduction is also highly informative. The term $\frac{1}{\beta}$ works as a learning rate; a smaller $\beta$ yields a more aggressive update, theoretically explaining the faster initial gains seen in our ablation study. Besides, the $\pi_{\text{user}}^{*}(\mathbf{y}_{k}|\mathbf{x})$ term reveals that the most impactful learning signals come from correcting plausible mistakes (high $\pi_{\text{user}}^{*}$ with $r=-1$) instead of nonsensical ones. Finally, this result provides strong theoretical justification for the one-step adaptation design in \ours{}. As a single update is provably beneficial, the algorithm effectively avoids the complexity and potential instability of iterative optimization within a single turn.

\subsection{Cumulative Convergence Guarantee}\label{sec:theory-2}

While Theorem~\ref{thm:monotonic_error} guarantees improvement at each step, our second theorem extends this result to the entire multi-turn interaction, providing a bound on the cumulative error and ensuring \textbf{long-term convergence} in our Theorem \ref{thm:cumulative_error} (proof in Appendix~\ref{sec:cumulative_error_bound}).
\begin{theorem}[Cumulative Error Bound]
\label{thm:cumulative_error}
\textit{\fontfamily{ppl}\selectfont
Suppose $\pi_{\theta_k} = \tilde{\pi}^*_{\theta_k}$ by applying exact policy update in \ours{}, after $K$ turns of interaction, the KL divergence between the underlying user policy $\pi_{\text{user}}^*$ and the practical target policy $\tilde{\pi}^*_{\theta_K}$ in \eqref{eq:practical_target} is bounded as follows:
\begin{equation}
\label{eq:cumulative_error}
\normalfont
D_{\text{KL}}(\pi_{\text{user}}^* \| \tilde{\pi}^*_{\theta_K}) \leq D_{\text{KL}}(\pi_{\text{user}}^* \| \pi_{\theta_{0}}) -\frac{1}{\beta}\sum_{k=1}^K\pi_{\text{user}}^*(\mathbf{y}_{k} | \mathbf{x}) \ .
\end{equation}
}
\end{theorem}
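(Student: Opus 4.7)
\emph{Proof proposal.} The plan is to obtain the cumulative bound by telescoping the per-turn inequality from Theorem~\ref{thm:monotonic_error} across all $K$ turns of the interaction. The exact-update hypothesis $\pi_{\theta_k}=\tilde{\pi}^*_{\theta_k}$ stated in the theorem is precisely the structural condition that links consecutive turns: the reference policy $\pi_{\theta_{k-1}}$ that enters \eqref{eq:practical_target} at turn $k$ coincides, under this hypothesis, with the previous target $\tilde{\pi}^*_{\theta_{k-1}}$, so the per-turn bound of Theorem~\ref{thm:monotonic_error} can be chained without loss. Beyond this observation, the argument reduces to careful bookkeeping over a sequence of KL divergences and requires no new analytic machinery.

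Concretely, I would first instantiate Theorem~\ref{thm:monotonic_error} at every $k\in\{1,\dots,K\}$ to obtain
\[
D_{\text{KL}}(\pi_{\text{user}}^* \| \tilde{\pi}^*_{\theta_k}) - D_{\text{KL}}(\pi_{\text{user}}^* \| \tilde{\pi}^*_{\theta_{k-1}}) \;\leq\; -\tfrac{1}{\beta}\,\pi_{\text{user}}^*(\mathbf{y}_k|\mathbf{x}).
\]
Summing these $K$ inequalities collapses the left-hand side via telescoping into $D_{\text{KL}}(\pi_{\text{user}}^* \| \tilde{\pi}^*_{\theta_K}) - D_{\text{KL}}(\pi_{\text{user}}^* \| \tilde{\pi}^*_{\theta_0})$, while the right-hand side assembles into the advertised $-\tfrac{1}{\beta}\sum_{k=1}^K \pi_{\text{user}}^*(\mathbf{y}_k|\mathbf{x})$. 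A short induction on $K$ would deliver the same conclusion turn by turn and may read more cleanly in the final write-up. Rearranging and identifying the base case $\tilde{\pi}^*_{\theta_0} = \pi_{\theta_0}$ --- a natural convention, since at $k=0$ no corrective reweighting has taken place and \eqref{eq:practical_target} degenerates to the initial reference policy --- then yields exactly the claimed inequality.

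The main obstacle I anticipate is not computational but one of careful bookkeeping: verifying that the hypothesis of Theorem~\ref{thm:monotonic_error} is legitimately re-applicable at every turn $k\geq 2$. This requires the reference policy entering the construction of $\tilde{\pi}^*_{\theta_k}$ in \eqref{eq:practical_target} to be $\tilde{\pi}^*_{\theta_{k-1}}$ rather than some other intermediate policy, which holds only because the exact-update hypothesis propagates the equality $\pi_{\theta_{k-1}}=\tilde{\pi}^*_{\theta_{k-1}}$ forward in time. If that hypothesis were relaxed, one would have to carry a residual term measuring the gap $\pi_{\theta_{k-1}}\neq\tilde{\pi}^*_{\theta_{k-1}}$ through the telescoping --- precisely the approximation-error analysis anticipated in Section~\ref{sec:theory-3}. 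Being explicit about this identification, and about the base-case convention for $\tilde{\pi}^*_{\theta_0}$, are the items I would double-check before finalizing the proof.
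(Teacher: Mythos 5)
Your proposal is correct and follows essentially the same route as the paper's proof: the paper likewise writes the final divergence as the initial divergence plus a telescoping sum of one-step changes, bounds each change by Theorem~\ref{thm:monotonic_error}, and uses the same convention $\tilde{\pi}^*_{\theta_0}=\pi_{\theta_0}$. Your extra remark about why the exact-update hypothesis licenses re-applying the per-turn bound at every $k$ is a point the paper leaves implicit, but it does not change the argument.
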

\noindent\textbf{Remark.} This theorem formalizes the core value proposition of multi-turn interaction within the \ours{} framework. First, \textit{the benefits of adaptation accumulate over time}. The summation term grows with each turn of feedback, progressively tightening the upper bound on the error. This formally demonstrates that the more a user interacts with the model, the closer the model policy will align with their true intent. Second, this result provides a \textit{clear path to convergence}. As the number of turns $K$ increases, the cumulative subtracted term grows, forcing the error to decrease and ensuring the adaptation process is on a trajectory guaranteed to converge toward the optimal policy of user.

\begin{table*}[t]
\centering
\caption{
    Main results of \ours{} across diverse task domains, reporting accuracy (\%). We compare the \textbf{Baseline} (standard multi-turn interaction) with several variants of \ours{}. The notation `(+A+B)` indicates the update location (A: ``LM" for LM Head, ``HS" for Hidden States) and the reward model type (B: ``R" for rule-based, ``M" for model-based). The values in \textcolor{darkred}{red} denote the absolute improvement over the baseline. Further details on parameter updates and reward models are provided in Appendix~\ref{sec:parameter} and~\ref{sec:reward}, respectively.
}
\label{tab:main_result}
\resizebox{\textwidth}{!}{
\begin{tabular}{cc cc  cc  cc  c}
\toprule
& & \multicolumn{2}{c}{\textbf{Mathematical Reasoning}} & \multicolumn{2}{c}{\textbf{General Reasoning}} & \multicolumn{2}{c}{\textbf{Multilingual Reasoning}} & \textbf{Code Gen.}\\
\cmidrule(lr){3-4} \cmidrule(lr){5-6} \cmidrule(lr){7-8} \cmidrule(lr){9-9}
\textbf{Model} & \textbf{Method} & \textbf{MATH} & \textbf{MATH-500} & \textbf{MMLU-R} & \textbf{SuperGPQA} & \textbf{MT-AIME24} & \textbf{MT-MATH100} & \textbf{HumanEval} \\

\midrule
\multirow{4}{*}{\shortstack{Qwen2.5-0.5B \\ -Instruct}} & Baseline & 13.40 & 12.20 & 7.27 & 1.90 & 3.48 & 15.40 & 31.09  \\
& \ours{} (+LM + R) & \textbf{30.40} \gain{17.00} & 28.00 \gain{15.80} & 9.07 \gain{1.80} & 5.63 \gain{3.73} & 3.67 \gain{0.19} & 22.80 \gain{7.40} & 37.19 \gain{6.10} \\ 
& \ours{} (+HS + R) & 25.40 \gain{12.00} & 25.00 \gain{12.80} & 11.00 \gain{3.73} & 5.00 \gain{3.10} & 4.90 \gain{1.42} & 20.90 \gain{5.50} & 37.27 \gain{6.18} \\ 
& \ours{} (+LM + M) & 27.00 \gain{13.60} & \textbf{28.40} \gain{16.20} & \textbf{13.72} \gain{6.45} & \textbf{6.57} \gain{4.67} & \textbf{6.13} \gain{2.65} & \textbf{25.20} \gain{9.80} & \textbf{39.37} \gain{8.28} \\

\midrule
\multirow{4}{*}{Qwen3-0.6B} & Baseline & 25.00 & 26.00 & 18.60 & 4.20 & 4.80 & 31.30 & 41.46  \\
& \ours{} (+LM + R) & 50.20 \gain{25.20} & 51.60 \gain{25.60} & 33.40 \gain{14.80} & 9.13 \gain{4.93} & 7.58 \gain{2.78} & 56.60 \gain{25.30} & 45.73 \gain{4.27} \\ 
& \ours{} (+HS + R) & 50.80 \gain{25.80} & 50.60 \gain{24.60} & 36.00 \gain{17.40} & 9.70 \gain{5.50} & 7.90 \gain{3.10} & 51.90 \gain{20.60} & 47.27 \gain{5.81} \\ 
& \ours{} (+LM + M) & \textbf{52.20} \gain{27.20} & \textbf{54.60} \gain{28.60} & \textbf{40.68} \gain{22.08} & \textbf{15.73} \gain{11.53} & \textbf{9.43} \gain{4.63} & \textbf{59.40} \gain{28.10} & \textbf{49.37} \gain{7.91} \\

\midrule
\multirow{4}{*}{\shortstack{Qwen2.5-7B \\ -Instruct}} & Baseline & 47.00 & 49.40 & 45.36 & 19.31 & 19.24 & 60.34 & 57.92 \\
& \ours{} (+LM + R) & 63.40 \gain{16.40} & 62.40 \gain{13.00} & 62.17 \gain{16.81} & 37.26 \gain{17.95} & 27.14 \gain{7.90} & 73.16 \gain{12.82} & 63.41 \gain{5.49} \\ 
& \ours{} (+HS + R) & 64.40 \gain{17.40} & 63.40 \gain{14.00} & 67.31 \gain{21.95} & 36.27 \gain{16.96} & 26.75 \gain{7.51} & 72.27 \gain{11.93} & 64.24 \gain{6.32} \\ 
& \ours{} (+LM + M) & \textbf{65.20} \gain{18.20} & \textbf{65.60} \gain{16.20} & \textbf{68.47} \gain{23.11} & \textbf{40.67} \gain{21.36} & \textbf{30.21} \gain{10.97} & \textbf{75.13} \gain{14.79} & \textbf{67.36} \gain{9.44} \\

\midrule
\multirow{4}{*}{Qwen3-8B} & Baseline & 55.80 & 58.80 & 51.35 & 27.61 & 30.37 & 74.74 & 78.04  \\
& \ours{} (+LM + R) & 65.80 \gain{10.00} & \textbf{72.80} \gain{14.00} & 67.27 \gain{15.92} & 36.11 \gain{8.50} & 40.16 \gain{9.79} & 85.16 \gain{10.42} & 81.71 \gain{3.67} \\ 
& \ours{} (+HS + R) & 65.80 \gain{10.00} & 66.20 \gain{7.40} & 68.37 \gain{17.02} & 37.73 \gain{10.12} & 42.27 \gain{11.90} & 86.93 \gain{12.19} & 82.37 \gain{4.33} \\ 
& \ours{} (+LM + M) & \textbf{67.40} \gain{11.60} & 68.40 \gain{9.60} & \textbf{70.36} \gain{19.01} & \textbf{40.34} \gain{12.73} & \textbf{43.93} \gain{13.56} & \textbf{88.37} \gain{13.63} & \textbf{83.65} \gain{5.61} \\

\midrule
\multirow{4}{*}{\shortstack{DeepSeek-R1\\-Distill-Llama-8B}} & Baseline & 5.20 & 7.20 & 30.46 & 10.37 & 4.73 & 17.35 & 25.00  \\
& \ours{} (+LM + R) & 7.80 \gain{2.60} & 18.40 \gain{11.20} & 41.14 \gain{10.68} & 20.49 \gain{10.12} & 6.13 \gain{1.40} & 21.17 \gain{3.82} & 39.03 \gain{14.03} \\ 
& \ours{} (+HS + R) & 8.40 \gain{3.20} & 18.20 \gain{11.00} & 42.18 \gain{11.72} & 21.34 \gain{10.97} & 7.27 \gain{2.54} & 23.85 \gain{6.50} & 38.37 \gain{13.37} \\ 
& \ours{} (+LM + M) & \textbf{8.60} \gain{3.40} & \textbf{20.80} \gain{13.60} & \textbf{45.79} \gain{15.33} & \textbf{24.97} \gain{14.60} & \textbf{8.19} \gain{3.46} & \textbf{24.67} \gain{7.32} & \textbf{39.26} \gain{14.26} \\

\bottomrule
\end{tabular}
}
\end{table*}

\subsection{Unified Error Bound for the Adapted Policy}\label{sec:theory-3}
The previous theorems guarantee our \textit{target} policy improves. However, the \textit{final} policy, $\pi_{\theta_{k}}$, is subject to the approximation error from the first-order Taylor expansion used for our efficient update. The following unified theorem combines the guaranteed improvement from feedback with the accumulated linearization error to provide a comprehensive bound on the true performance of \ours{} (proof in Appendix~\ref{sec:unified_error_bound}).
\begin{theorem}[Unified Convergence Bound]
\label{thm:unified_bound}
\textit{\fontfamily{ppl}\selectfont
Assume $\log\pi_\theta$ is Lipschitz-smooth with constant $L$. After $K$ turns of interaction in \ours{}, the divergence of the final adapted policy $\pi_{\theta_K}$ from the underlying user policy $\pi_{\text{user}}^*$ is bounded by:
\begin{equation}
\label{eq:unified_bound}
\normalfont
\begin{aligned}
D_{\text{KL}}(\pi_{\text{user}}^* \| \pi_{\theta_K}) &\le  \underbrace{D_{\text{KL}}(\pi_{\text{user}}^* \| \pi_{\theta_0})}_{\text{Initial Error}} \\
& \underbrace{-\frac{1}{\beta}\sum_{k=1}^K\pi_{\text{user}}^*(\mathbf{y}_k | \mathbf{x})}_{\text{Improvement}} + \underbrace{\frac{L}{2} \sum_{k=1}^{K} \|\Delta\theta_k\|^2_2}_{\text{Approx. Error}} \ .
\end{aligned}
\end{equation}
}
\end{theorem}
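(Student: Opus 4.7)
The plan is to prove Theorem~\ref{thm:unified_bound} by a telescoping argument: at each turn $k$ I will establish a per-step inequality controlling the one-step change $D_{\text{KL}}(\pi_{\text{user}}^* \| \pi_{\theta_k}) - D_{\text{KL}}(\pi_{\text{user}}^* \| \pi_{\theta_{k-1}})$ by $-\tfrac{1}{\beta}\pi_{\text{user}}^*(\mathbf{y}_k|\mathbf{x}) + \tfrac{L}{2}\|\Delta\theta_k\|_2^2$, then sum from $k=1$ to $K$ so that the intermediate KL terms telescope, leaving only the initial error $D_{\text{KL}}(\pi_{\text{user}}^* \| \pi_{\theta_0})$.

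The core of the argument is the decomposition of the per-step change via the intermediate target policy $\tilde{\pi}^*_{\theta_k}$ that \ours{} aims at:
\begin{equation*}
D_{\text{KL}}(\pi_{\text{user}}^* \| \pi_{\theta_k}) - D_{\text{KL}}(\pi_{\text{user}}^* \| \pi_{\theta_{k-1}}) = \underbrace{\bigl[D_{\text{KL}}(\pi_{\text{user}}^* \| \tilde{\pi}^*_{\theta_k}) - D_{\text{KL}}(\pi_{\text{user}}^* \| \pi_{\theta_{k-1}})\bigr]}_{\text{(A) ideal improvement}} + \underbrace{\bigl[D_{\text{KL}}(\pi_{\text{user}}^* \| \pi_{\theta_k}) - D_{\text{KL}}(\pi_{\text{user}}^* \| \tilde{\pi}^*_{\theta_k})\bigr]}_{\text{(B) linearization gap}}.
\end{equation*}
For term (A), I would replay the calculation from the proof of Theorem~\ref{thm:monotonic_error}, noting that $\tilde{\pi}^*_{\theta_k}$ is constructed in~\eqref{eq:practical_target} with $\pi_{\theta_{k-1}}$ as its reference; this gives $(\text{A}) \le -\tfrac{1}{\beta}\pi_{\text{user}}^*(\mathbf{y}_k|\mathbf{x})$ without requiring the exact-update hypothesis of Theorem~\ref{thm:cumulative_error}.

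For term (B), I would expand it as $(\text{B}) = \mathbb{E}_{\mathbf{y}\sim\pi_{\text{user}}^*}[\log \tilde{\pi}^*_{\theta_k}(\mathbf{y}|\mathbf{x}) - \log \pi_{\theta_k}(\mathbf{y}|\mathbf{x})]$ and invoke the $L$-Lipschitz-smoothness of $\log\pi_\theta$ to get the pointwise quadratic bound
\begin{equation*}
\log \pi_{\theta_k}(\mathbf{y}|\mathbf{x}) \;\ge\; \log \pi_{\theta_{k-1}}(\mathbf{y}|\mathbf{x}) + \nabla_\theta \log \pi_{\theta_{k-1}}(\mathbf{y}|\mathbf{x})^\top \Delta\theta_k - \tfrac{L}{2}\|\Delta\theta_k\|_2^2.
\end{equation*}
The linearized update~\eqref{eq:linear_system} is designed so that the first-order Taylor expansion around $\theta_{k-1}$ matches $\tilde{\pi}^*_{\theta_k}$, which lets me identify $\log \tilde{\pi}^*_{\theta_k}(\mathbf{y}|\mathbf{x})$ with the right-hand side up to this first-order term, collapsing the pointwise difference to at most $\tfrac{L}{2}\|\Delta\theta_k\|_2^2$. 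Because $\pi_{\text{user}}^*$ is a probability distribution, taking expectation preserves the scalar bound and yields $(\text{B}) \le \tfrac{L}{2}\|\Delta\theta_k\|_2^2$. Combining (A) and (B) and telescoping $k=1,\dots,K$ produces~\eqref{eq:unified_bound}.

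The hard part will be making the bound on term (B) fully rigorous. The linearized system~\eqref{eq:linear_system} only enforces the first-order match at the single observed response $\mathbf{y}_k$, whereas the expectation in (B) ranges over all $\mathbf{y}$ drawn from $\pi_{\text{user}}^*$. A careful handling likely requires either (i) viewing the CG solution as the minimum-norm update whose linearized policy change globally best-approximates $\tilde{\pi}^*_{\theta_k} - \pi_{\theta_{k-1}}$ in parameter space, and pushing the smoothness bound through the Jacobian, or (ii) absorbing the mismatch between the log-scale target and the probability-scale linearization into the constant $L$ via the chain rule. Either route keeps the final cumulative remainder at the stated $\tfrac{L}{2}\sum_{k=1}^K \|\Delta\theta_k\|_2^2$ order, and the remainder of the proof is routine telescoping and rearrangement.
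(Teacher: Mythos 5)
Your overall skeleton is the same as the paper's: telescope the error over turns, decompose each one-step change through the practical target $\tilde{\pi}^*_{\theta_k}$ into an improvement term (A) and an inexact-update term (B), bound (A) exactly as in Theorem~\ref{thm:monotonic_error}, and sum. Where you diverge — and where your argument has a genuine gap — is term (B). You propose to write $(\mathrm{B})=\mathbb{E}_{\mathbf{y}\sim\pi_{\text{user}}^*}\!\left[\log\tilde{\pi}^*_{\theta_k}(\mathbf{y}|\mathbf{x})-\log\pi_{\theta_k}(\mathbf{y}|\mathbf{x})\right]$ and then identify $\log\tilde{\pi}^*_{\theta_k}(\mathbf{y}|\mathbf{x})$ with the first-order Taylor expansion of $\log\pi_\theta$ at $\theta_{k-1}$, so that $L$-smoothness collapses the pointwise difference to $\tfrac{L}{2}\|\Delta\theta_k\|_2^2$. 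That identification is not available from the algorithm's construction, for two reasons you yourself flag but do not resolve: the linearized system \eqref{eq:linear_system} enforces (approximate, least-squares) agreement only at the single observed response $\mathbf{y}_k$, not at every $\mathbf{y}$ in the support of $\pi_{\text{user}}^*$; and the linearization is of the probability $\pi_\theta(\mathbf{y}_k|\mathbf{x})$, not of $\log\pi_\theta$, so the Lipschitz-smoothness hypothesis on $\log\pi_\theta$ cannot be chained through it without an additional argument (your routes (i) and (ii) are sketches, not proofs). As written, the pointwise step would fail for $\mathbf{y}\neq\mathbf{y}_k$, where $\tilde{\pi}^*_{\theta_k}$ is a rescaling of $\pi_{\theta_{k-1}}$ by $1/Z_k(\mathbf{x})$ that has no a priori relation to the parametric policy $\pi_{\theta_{k-1}+\Delta\theta_k}$.

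For comparison, the paper closes (B) differently: it inserts the intermediate divergence and uses the chain
\begin{equation*}
D_{\text{KL}}(\pi_{\text{user}}^*\|\pi_{\theta_k})-D_{\text{KL}}(\pi_{\text{user}}^*\|\tilde{\pi}^*_{\theta_k})\;\le\; D_{\text{KL}}(\tilde{\pi}^*_{\theta_k}\|\pi_{\theta_k})\;\le\;\frac{L}{2}\|\Delta\theta_k\|_2^2\ ,
\end{equation*}
taking the first inequality as a mirror-descent-style fact and the second directly from the smoothness assumption stated as $D_{\text{KL}}(\pi_\theta\|\pi_{\theta'})\le\tfrac{L}{2}\|\theta-\theta'\|_2^2$ (i.e., treating $\tilde{\pi}^*_{\theta_k}$ as the policy the parametric update realizes). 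If you want to salvage your route, the cleanest fix is to adopt this same intermediate step rather than attempt a pointwise Taylor identification over all $\mathbf{y}$: bound (B) by a divergence between $\tilde{\pi}^*_{\theta_k}$ and $\pi_{\theta_k}$ and then invoke the smoothness assumption once, which is exactly what keeps the remainder at $\tfrac{L}{2}\sum_k\|\Delta\theta_k\|_2^2$. Everything else in your proposal (the decomposition, the bound on (A) without the exact-update hypothesis, and the telescoping) matches the paper's proof.
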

\noindent\textbf{Remark.} This unified bound rigorously quantifies the inherent trade-off in online policy adaptation. Each turn reduces the KL divergence from the underlying user optimal policy by a reward-driven term $\frac{1}{\beta}\pi^*_{\text{user}}(\mathbf{y}_k|\mathbf{x})$, while incurring an approximation error $\frac{L}{2}\lVert\Delta\theta_k\rVert_2^2$ due to linearization. Convergence requires the net progress per turn to remain positive. This balance is affected by two factors. Firstly, the approximation error is controlled because $\pi_{\theta_{k-1}}(\mathbf{y}_k|\mathbf{x})$ is typically small in practice, limiting the magnitude of $\Delta\theta_k$ according to \eqref{eq:practical_target}. This ensures the improvement from a potentially large $\pi^*_{\text{user}}(\mathbf{y}_k|\mathbf{x})$ can effectively outweigh the approximation cost. Secondly, the regularization coefficient $\beta$ modulates this trade-off: a smaller $\beta$ accelerates learning but risks amplifying approximation error, while a larger $\beta$ stabilizes updates at the cost of slower progress. This interplay explains the two-phase behavior observed in practice: rapid initial corrections followed by stable, fine-grained refinements, as detailed in Appendix~\ref{sec:ablation_beta}. The theorem therefore serves as both a robust theoretical guarantee and a practical design guide for balancing adaptation speed and stability.

\section{Empirical Results}
\label{sec:results}
We conduct extensive experiments to validate the effectiveness and efficiency of our proposed \ours{} framework in dynamic, multi-turn settings. In this section, we present our two primary findings: we first demonstrate the state-of-the-art performance of \ours{} across a diverse range of tasks (Section~\ref{sec:sota_performance}), and then we analyze its effectiveness in online error correction (Section~\ref{sec:error_correction}). A comprehensive description of our experimental setup, including the datasets, baselines, evaluation metrics, and reward models, is deferred to Appendix~\ref{sec:setup}. Furthermore, in-depth ablation studies analyzing our optimization strategy and the hyperparameter $\beta$ are provided in Appendix~\ref{sec:ablation_studies}.

\subsection{Effectiveness and Generalizability Across Task Domains}
\label{sec:sota_performance}

To validate the generalization ability and flexibility of \ours{}, we first evaluated its performance across four different domains: mathematical reasoning, general reasoning, code generation, and multilingual reasoning. Detailed information about the datasets is provided in Appendix~\ref{sec:datasets}. The results are shown in Table \ref{tab:main_result}, and for more data sets and model results, see Appendix~\ref{sec:additional_result}. From the results, we draw several key conclusions. First, \ours{} consistently outperforms the \textit{baseline} method (standard multi-turn interaction) across all benchmark datasets and with different LLM models, demonstrating its broad \textbf{applicability} and \textbf{effectiveness}. Second, \ours{} is highly \textbf{flexible}. It performs well regardless of whether the \textit{LM Head} or \textit{Hidden States} are updated (see Appendix~\ref{sec:parameter} for details on parameter updates), indicating its adaptability to different parameter update strategies. Furthermore, the also results highlight the impact of feedback granularity. The dense \textit{model-based} reward, which provides fine-grained feedback on the reasoning process, consistently yields the best or near-best performance across almost all settings. This demonstrates that \ours{} can effectively leverage detailed preference information to achieve superior alignment. On the contrary, we note that even with the sparser, \textit{rule-based} reward, \ours{} still delivers substantial improvements. This observation is consistent with our theoretical analysis in Theorem~\ref{thm:monotonic_error}, which guarantees convergence even with simpler feedback signals. In addition, \ours{} performance can reach or even outperform the multi turn training method (Appendix~\ref{sec:comparison_with_training}).

\begin{figure*}[t]
    \centering
    \includegraphics[width=1.0\textwidth]{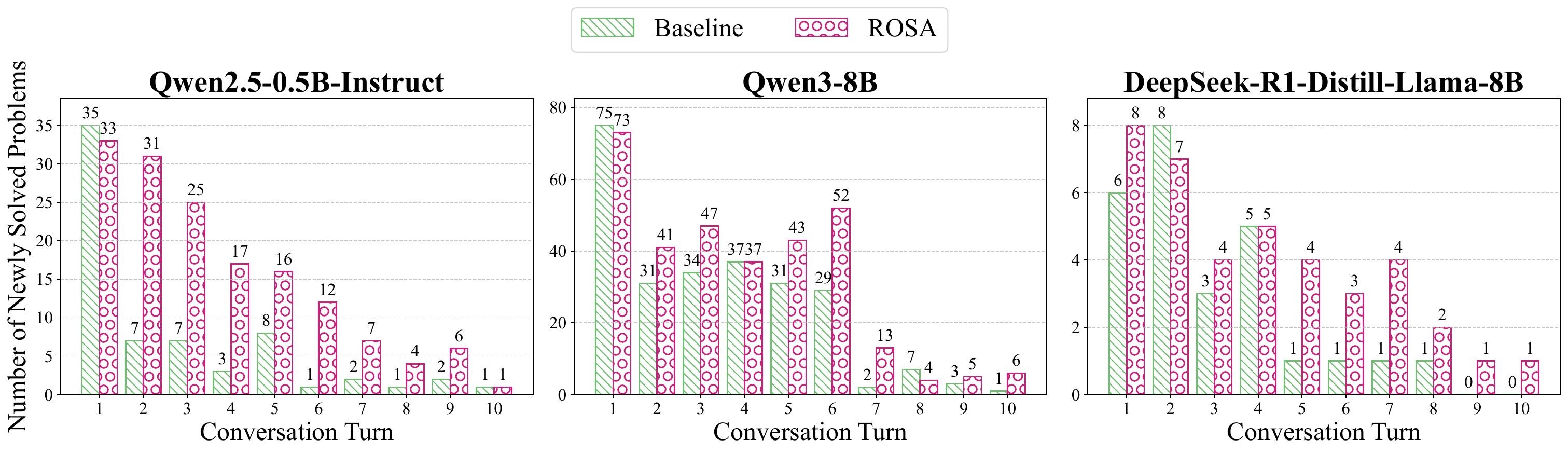} 
    \vspace{-4mm}
    \caption{
    Comparison of newly solved problems per round on MATH datasets. 
    }
    \label{fig:result_2}
    \vspace{-4mm}
\end{figure*}

\subsection{Effectiveness in Online Error Correction}
\label{sec:error_correction}

A core claim of our work is that \ours{} enhances not just final accuracy, but the capacity of the model for in-conversation self-correction. To quantify this, we first examine the \textit{Correction Uplift} metric, which measures the percentage of initially incorrect problems successfully solved in subsequent turns. As shown in Table~\ref{tab:correction_uplift}, \ours{} dramatically improves this metric across all benchmarks, confirming its strong self-correction capability.

\begin{table}[h]
\centering
\caption{Comparison of Correction Uplift (\%).}
\label{tab:correction_uplift}
\resizebox{1.0\columnwidth}{!}{
\begin{tabular}{ll cccc}
\toprule
\textbf{Model} & \textbf{Method} & \textbf{MATH} & \textbf{AIME25} & \textbf{MATH-500} & \textbf{HumanEval} \\
\midrule
\multirow{2}{*}{Qwen2.5-0.5B-Instruct} & Baseline & 6.88 & 0.00 & 6.79 & 14.39 \\
& \ours{} & \textbf{25.48} \gain{18.60} & \textbf{6.67} \gain{6.67} & \textbf{24.05} \gain{17.26} & \textbf{26.09} \gain{11.70} \\
\midrule
\multirow{2}{*}{Qwen3-0.6B} & Baseline & 17.40 & 3.57 & 17.78 & 19.33\\
& \ours{} & \textbf{48.87} \gain{31.47} & \textbf{16.67} \gain{13.10} & \textbf{51.31} \gain{33.53} & \textbf{31.01} \gain{11.68} \\
\midrule
\multirow{2}{*}{Qwen2.5-7B-Instruct} & Baseline & 12.54 & 3.57 & 13.65 & 28.57 \\
& \ours{} & \textbf{41.53} \gain{28.99} & \textbf{20.69} \gain{17.12} & \textbf{36.91} \gain{23.26} & \textbf{40.00} \gain{11.43} \\
\midrule
\multirow{2}{*}{Qwen3-8B} & Baseline & 23.00 & 7.41 & 24.54 & 47.83 \\
& \ours{} & \textbf{40.42} \gain{17.42} & \textbf{27.59} \gain{20.18} & \textbf{52.94} \gain{28.40} & \textbf{62.50} \gain{14.67} \\
\midrule
\multirow{2}{*}{\shortstack{DeepSeek-R1-Distill\\-Llama-8B}} & Baseline & 4.05 & 3.57 & 6.45 & 15.49 \\
& \ours{} & \textbf{6.30} \gain{2.25} & \textbf{13.79} \gain{10.22} & \textbf{17.41} \gain{10.96} & \textbf{31.97} \gain{16.48}\\
\bottomrule
\end{tabular}
}
\vspace{-2mm}
\end{table}

To further evaluate the efficiency of this correction process, we analyze the distribution of newly solved problems per round across different LLMs on the MATH dataset, as visualized in Figure~\ref{fig:result_2}. Baseline models (green bars) exhibit sharply diminishing returns after the first turn, failing to effectively correct errors in subsequent interactions. In contrast, \ours{} (purple bars) sustains a significantly higher volume of successful corrections throughout the conversation. 
This empirical result aligns with our theoretical analysis (Theorem~\ref{thm:cumulative_error}), which establishes that \ours{} effectively learns from failures, enabling the policy to progressively align with user preferences. This capability is particularly impactful for LLMs, substantially boosting their multi-turn reasoning performance by turning failures into corrections. A detailed case study is provided in Appendix~\ref{sec:case_study}.

\subsection{Efficiency Analysis}
\label{sec:exp-efficiency}

A potential concern with test-time adaptation is the computational overhead introduced by the online parameter updates. Indeed, \ours{} requires calculating gradients and updating the model at intermediate steps, which theoretically increases the latency per turn. To rigorously evaluate this trade-off, we present the Time-to-Accuracy curves in Figure~\ref{fig:time_to_accuracy}. As illustrated, in the initial phase of interaction, \ours{} (solid lines) may exhibit a lower accuracy compared to the baseline (dashed lines) for the same elapsed time. This initial lag is the direct consequence of the additional computational cost required for gradient calculation and backpropagation, which extends the duration of each turn. However, as the interaction progresses, the benefits of \ours{} become evident. By dynamically modifying model parameters to align with user preferences, \ours{} corrects errors more effectively, resulting in a significantly steeper learning curve. Consequently, \ours{} rapidly overtakes the baseline, achieving a substantially higher accuracy within the same total wall-clock time. This crossover demonstrates that despite the per-turn latency overhead, the superior error-correction rate of \ours{} makes it more time-efficient in the long run for solving complex problems. Finally, regarding spatial complexity, we report the peak GPU memory usage in Table~\ref{tab:memory_efficiency}. \ours{} incurs negligible memory overhead (e.g., a maximum increase of only $+1.0$ GB on MATH), confirming that our approach enhances reasoning capabilities without imposing prohibitive hardware constraints.

\begin{table}[h]
\centering
\caption{Comparison of peak GPU memory (GB) on Qwen3-0.6B. \ours{} introduces negligible memory overhead.}
\label{tab:memory_efficiency}
\resizebox{\linewidth}{!}{
\begin{tabular}{lcccc}
\toprule
\textbf{Method} & \textbf{MATH} & \textbf{MATH500} & \textbf{AIME25} & \textbf{HumanEval} \\
\midrule
Baseline & \textbf{90.6} & \textbf{94.9} & \textbf{94.9} & \textbf{94.8} \\
\ours{} & 91.6 \textcolor{red}{(+1.0)} & 95.2 \textcolor{red}{(+0.3)} & 95.1 \textcolor{red}{(+0.2)} & 95.1 \textcolor{red}{(+0.3)} \\
\bottomrule
\end{tabular}
}
\end{table}

\begin{figure}[h]
    \centering
    \includegraphics[width=\columnwidth]{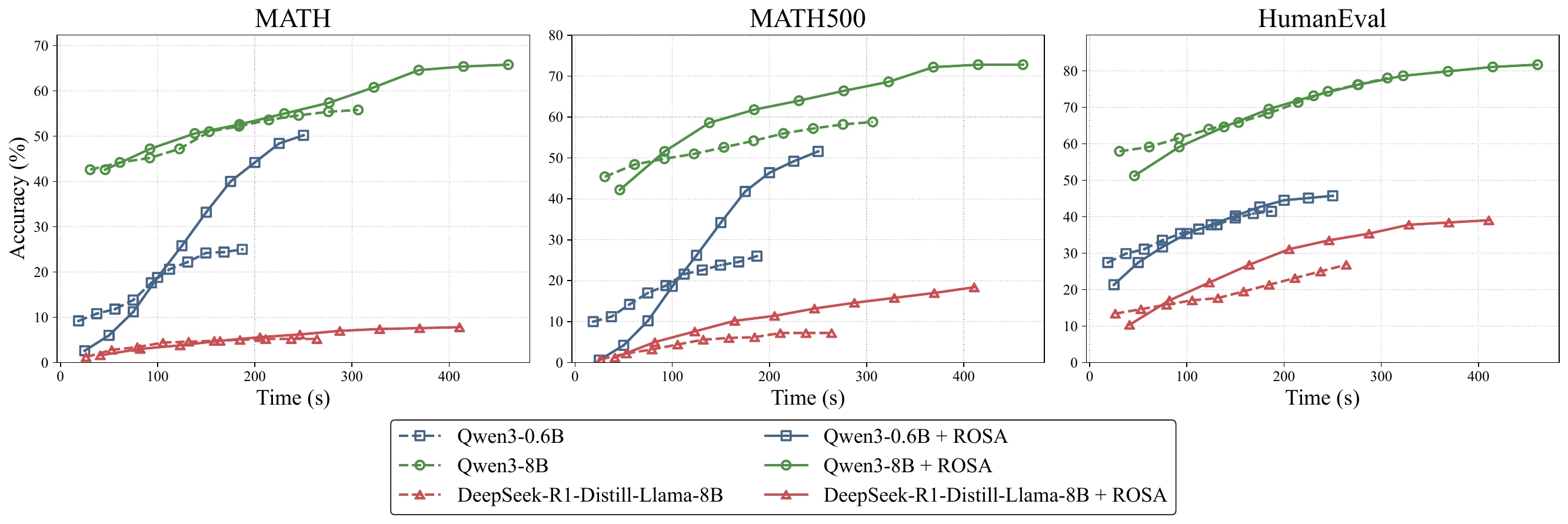} 
    \vspace{-5mm}
    \caption{
    Time-to-Accuracy comparison across different models and datasets. The x-axis represents cumulative wall-clock time.
    }
    \label{fig:time_to_accuracy}
    \vspace{-5mm}
\end{figure}

\section{Conclusions}
\label{sec:Conclusion}

In this work, we address the degradation of LLM performance in multi-turn dialogues by proposing a new paradigm \parad{}, and its first practical implementation \ours{}. \ours{} enables efficient, in-conversation self-correction by updating model parameters online using real-time feedback. 

\newpage

\section*{Impact Statement}

In this paper, we introduce ROSA, a lightweight test-time adaptation framework that enables Large Language Models (LLMs) to efficiently self-correct and align with user intent during multi-turn interactions. This work holds substantial potential for deploying capable reasoning agents in resource-constrained environments, as it significantly reduces the computational overhead and memory footprint typically associated with model fine-tuning. Crucially, by enhancing the reasoning and self-correction capabilities of smaller-scale models without requiring extensive retraining, our approach contributes to the democratization of high-performance AI, making advanced interactive assistants more accessible and sustainable.

\bibliography{example_paper}
\bibliographystyle{icml2026}

\newpage
\appendix
\onecolumn
\section{Related Work} \label{sec:related_work}

Research on improving the multi-turn capabilities of LLMs has largely proceeded along three main fronts: in-context learning, fine-tuning with multi-turn data, and reinforcement learning.

\paragraph{In-Context Learning and Prompting Strategies.}
A prominent line of work enhances multi-turn performance without modifying model parameters by leveraging the context window to guide the model's reasoning~\citep{ou-etal-2024-inductive, sun-etal-2024-parrot}. For instance, ChatCoT~\citep{chen-etal-2023-chatcot} models the chain-of-thought process as a multi-turn interaction to improve reasoning. Similarly, Reflexion~\citep{shinn2023reflexion} refines model behavior by converting environmental feedback into textual summaries, which are appended to the context for subsequent turns. MathChat~\citep{wu2024mathchat} extends this by introducing a user agent that can execute tools and inject the resulting feedback into the conversation. While effective, these methods are fundamentally limited by the model's intrinsic ability to interpret the provided context, and their performance is highly sensitive to the prompt design, which may even degrade performance in complex multi-turn scenarios if not perfectly aligned with the task.

\paragraph{Fine-Tuning with Multi-Turn Data.}
Another approach involves fine-tuning the model on datasets specifically designed to capture multi-turn dynamics~\citep{wang2024mint, zhou-etal-2024-enhancing}. For instance, WildChat~\citep{shi2025wildfeedbackaligningllmsinsitu} leverages live user feedback to automatically construct a preference dataset for subsequent fine-tuning. Addressing challenges within this domain, Codesteer~\citep{chen2025codesteer} identifies a ``gradient cancellation" issue, where gradients from early turns can interfere with those from later, more informative ones, and mitigates this by up-weighting the loss from the final turns of the interaction. However, a key limitation of such offline SFT approaches is their potential insufficiency in cultivating robust self-correcting behavior~\citep{li2025singleturnsurveymultiturninteractions, yi2025surveyrecentadvancesllmbased}. This challenge often stems from a distribution mismatch between the errors present in the training data and those produced by the model at inference time, as well as the risk of "behavioral collapse," where the model overfits to a narrow set of correction patterns.

\paragraph{Reinforcement Learning Approaches.}
Several methods employ reinforcement learning (RL) to teach models to self-improve over multiple rounds~\citep{zhou2025sweetrltrainingmultiturnllm, liu2025spiralselfplayzerosumgames}. For instance, RISE~\citep{qu2024recursive} utilizes multi-round offline RL with reward supervision, applying a majority vote over candidate outputs at inference time. SCoRe~\citep{kumar2024traininglanguagemodelsselfcorrect} adopts a two-stage process, first teaching the model to self-correct and then maximizing this capability via RL. Other works have explored multi-round group preference optimization by decomposing conversations into single-turn problems~\citep{shi-etal-2024-direct, yang2025zeroguiautomatingonlinegui}. While these RL-based strategies can cultivate sophisticated, self-correcting behaviors, they often face significant challenges, including high computational costs and training instability, particularly when applied to long, multi-turn dialogue contexts.

While existing methods have advanced multi-turn capabilities, they present a fundamental trade-off. Offline approaches, such as fine-tuning and reinforcement learning, incur prohibitive computational costs associated with training on long contexts. Conversely, online in-context methods, while lightweight, are often inefficient at correcting a model's flawed intrinsic policy. Inspired by recent advances in test-time optimization~\citep{zhang2025surveytesttimescalinglarge, zuo2025ttrltesttimereinforcementlearning, chen2025learning}, our work charts a new course. We introduce a novel paradigm, \parad{}, that enables efficient, online policy modification \textit{during inference}. This approach achieves the benefits of direct policy correction without the high cost of offline training and with greater flexibility than pure prompting strategies. We then present \ours{} as the first practical algorithm to realize this paradigm.

\section{Proofs} \label{sec:math}
\subsection{Proof of Theorem~\ref{thm:optimal_policy}} \label{sec:optima_policy}

\begin{proof}
The policy $\pi^*_{\theta_k}$ that maximizes the turn-wise RLHF objective is found by reformulating the objective as a minimization problem. We begin with the objective from Equation~\ref{eq:rlhf_objective} and combine terms inside the expectation:
\begin{align}
    J(\pi_\theta) &= \max_{\pi_{\theta}} \quad \mathbb{E}_{\mathbf{y} \sim \pi_{\theta}(\cdot|\mathbf{x})} \left[ r(\mathbf{x}, \mathbf{y}) \right] - \beta D_{\text{KL}} \left( \pi_\theta(\cdot|\mathbf{x}) \,\|\, \pi_{\theta_{k-1}}(\cdot|\mathbf{x}) \right)  \\
    &= \max_{\pi_\theta} \mathbb{E}_{\mathbf{y} \sim \pi_{\theta}(\cdot|\mathbf{x})} \left[ r(\mathbf{x}, \mathbf{y}) - \beta \log\left(\frac{\pi_\theta(\mathbf{y}|\mathbf{x})}{\pi_{\theta_{k-1}}(\mathbf{y}|\mathbf{x})}\right) \right]
\end{align}
Maximizing the above is equivalent to minimizing the negative of the term inside the expectation:
\begin{align}
    L(\pi_\theta) &= \min_{\pi_\theta} \mathbb{E}_{\mathbf{y} \sim \pi_{\theta}(\cdot|\mathbf{x})} \left[ \beta \log\left(\frac{\pi_\theta(\mathbf{y}|\mathbf{x})}{\pi_{\theta_{k-1}}(\mathbf{y}|\mathbf{x})}\right) - r(\mathbf{x}, \mathbf{y}) \right] \\
    &= \min_{\pi_\theta} \mathbb{E}_{\mathbf{y} \sim \pi_{\theta}(\cdot|\mathbf{x})} \left[ \log\left(\frac{\pi_\theta(\mathbf{y}|\mathbf{x})}{\pi_{\theta_{k-1}}(\mathbf{y}|\mathbf{x}) \exp(\frac{1}{\beta}r(\mathbf{x}, \mathbf{y}))}\right) \right]
\end{align}
We can recognize the denominator as being proportional to the optimal policy. Let us define the optimal policy $\pi^*_{\theta_k}$ by normalizing this term with the partition function $Z_k(\mathbf{x})$:
\begin{equation}
    \pi^*_{\theta_k}(\mathbf{y}|\mathbf{x}) \triangleq \frac{1}{Z_k(\mathbf{x})} \pi_{\theta_{k-1}}(\mathbf{y}|\mathbf{x}) \exp\left(\frac{1}{\beta}r(\mathbf{x}, \mathbf{y})\right)
\end{equation}
Substituting this definition back into the objective function:
\begin{align}
    L(\pi_\theta) &= \min_{\pi_\theta} \mathbb{E}_{\mathbf{y} \sim \pi_{\theta}(\cdot|\mathbf{x})} \left[ \log\left(\frac{\pi_\theta(\mathbf{y}|\mathbf{x})}{\pi^*_{\theta_k}(\mathbf{y}|\mathbf{x}) \cdot Z_k(\mathbf{x})}\right) \right] \\
    &= \min_{\pi_\theta} \left( \mathbb{E}_{\mathbf{y} \sim \pi_{\theta}(\cdot|\mathbf{x})} \left[ \log\left(\frac{\pi_\theta(\mathbf{y}|\mathbf{x})}{\pi^*_{\theta_k}(\mathbf{y}|\mathbf{x})}\right) \right] - \mathbb{E}_{\mathbf{x}}[\log Z_k(\mathbf{x})] \right) \\
    &= \min_{\pi_\theta}  \mathbb{E}_{\mathbf{y} \sim \pi_{\theta}(\cdot|\mathbf{x})} \left[ \log\left(\frac{\pi_\theta(\mathbf{y}|\mathbf{x})}{\pi^*_{\theta_k}(\mathbf{y}|\mathbf{x})}\right) \right]
\end{align}
Since the partition function $Z_k(\mathbf{x})$ and its logarithm do not depend on the parameters of the policy $\pi_\theta$ being optimized, minimizing $L(\pi_\theta)$ is equivalent to minimizing the KL divergence between $\pi_\theta$ and the target optimal policy $\pi^*_{\theta_k}$:
\begin{equation}
    \min_{\pi_\theta} \left[ D_{\text{KL}}(\pi_\theta(\cdot|\mathbf{x}) \| \pi^*_{\theta_k}(\cdot|\mathbf{x})) \right]
\end{equation}
The minimum value of the KL divergence is 0, which is achieved if and only if the two distributions are identical, i.e., $\pi_\theta = \pi^*_{\theta_k}$:

\begin{equation}
\pi_{\theta}(\mathbf{y}|\mathbf{x}) = \pi^*_{\theta_k}(\mathbf{y}|\mathbf{x}) = \frac{1}{Z_k(\mathbf{x})} \pi_{\theta_{k-1}}(\mathbf{y}|\mathbf{x}) \exp\left(\frac{1}{\beta} r(\mathbf{x}, \mathbf{y})\right) \ .
\end{equation}

This completes the proof. 
\end{proof}

\subsection{Derivation of Equation~\ref{eq:practical_target}}
\label{sec:derivation_practica_target}
\begin{definition}[Single-Sample Feedback Constraint]
    \label{assump:single_sample_feedback}
    In practical applications, feedback is typically received for only a single generated response, $\mathbf{y}_k$. We model this by constraining the general reward function $r(\mathbf{x}, \mathbf{y})$ as follows:
    \begin{equation}
        r(\mathbf{x}, \mathbf{y}) = r_k \cdot \mathbb{I}(\mathbf{y} = \mathbf{y}_k) =
        \begin{cases}
            r_k, & \text{if } \mathbf{y} = \mathbf{y}_k \\
            0, & \text{if } \mathbf{y} \neq \mathbf{y}_k
        \end{cases}
    \end{equation}
\end{definition}

\begin{proof}[Derivation of the Practical Target from the Theoretical Optimum]
Our goal is to derive the practical, single-sample update target (Equation~\ref{eq:practical_target}) and its corresponding partition function from the general theoretical optimal policy (Equation~\ref{eq:optimal_policy}) under the Single-Sample Feedback Constraint (Definition~\ref{assump:single_sample_feedback}).

\textbf{1. Derivation of the Practical Target Policy $\tilde{\pi}^*_{\theta_k}$.}
We substitute the constrained reward from Assumption~\ref{assump:single_sample_feedback} into the general policy formula from Equation~\ref{eq:optimal_policy}. This naturally yields a piecewise expression:
\begin{itemize}
    \item For the observed response where $\mathbf{y} = \mathbf{y}_k$, the reward is $r_k$, yielding:
    \begin{equation}
        \tilde{\pi}^*_{\theta_k}(\mathbf{y}|\mathbf{x}) = \frac{1}{Z_k(\mathbf{x})}\pi_{\theta_{k-1}}(\mathbf{y}|\mathbf{x}) \exp\!\left(\tfrac{1}{\beta} r_k\right)
    \end{equation}
    \item For all other responses where $\mathbf{y} \neq \mathbf{y}_k$, the reward is $0$, yielding:
    \begin{equation}
        \tilde{\pi}^*_{\theta_k}(\mathbf{y}|\mathbf{x}) = \frac{1}{Z_k(\mathbf{x})}\pi_{\theta_{k-1}}(\mathbf{y}|\mathbf{x}) \exp\!\left(0\right) = \frac{1}{Z_k(\mathbf{x})}\pi_{\theta_{k-1}}(\mathbf{y}|\mathbf{x})
    \end{equation}
\end{itemize}
Combining these two results gives the piecewise form in Equation~\ref{eq:practical_target}.

\textbf{2. Derivation of the Practical Partition Function $Z_k(\mathbf{x})$.}
Next, we apply the same constrained reward to the general partition function definition by splitting the sum over the entire response space $\mathcal{Y}$:
\begin{align*}
    Z_k(\mathbf{x}) &= \sum_{\mathbf{y}' \in \mathcal{Y}} \pi_{\theta_{k-1}}(\mathbf{y}'|\mathbf{x}) \exp\left(\frac{1}{\beta} r_k \cdot \mathbb{I}(\mathbf{y}' = \mathbf{y}_k)\right) \\
    &= \pi_{\theta_{k-1}}(\mathbf{y}_k|\mathbf{x}) \exp\left(\frac{1}{\beta} r_k\right) + \sum_{\mathbf{y}' \neq \mathbf{y}_k} \pi_{\theta_{k-1}}(\mathbf{y}'|\mathbf{x}) \exp\left(0\right) \\
    &= \pi_{\theta_{k-1}}(\mathbf{y}_k|\mathbf{x}) \exp\left(\frac{1}{\beta} r_k\right) + \left(1 - \pi_{\theta_{k-1}}(\mathbf{y}_k|\mathbf{x})\right) \\
    &= 1 - \left(1 - \exp\left(\frac{1}{\beta} r_k\right)\right)\pi_{\theta_{k-1}}(\mathbf{y}_k|\mathbf{x})
\end{align*}
This confirms the expression for the practical partition function used in Equation~\ref{eq:practical_target}. 
\end{proof}

\subsection{Proof of Theorem~\ref{thm:monotonic_error}} \label{sec:monotonic_error_reduction}

\begin{proof}
	We analyze the one-step change in error, $ D_{\text{KL}}(\pi_{\text{user}}^* \| \pi^*_{\theta_{k}}) - D_{\text{KL}}(\pi_{\text{user}}^* \| \pi^*_{\theta_{k-1}})$.
	\begin{align}
	& D_{\text{KL}}(\pi_{\text{user}}^* \| \pi^*_{\theta_{k}}) - D_{\text{KL}}(\pi_{\text{user}}^* \| \pi^*_{\theta_{k-1}}) \label{eq:delta_e_start} \\
	&= \left[ \sum_{\mathbf{y}} \pi_{\text{user}}^*(\mathbf{y}) \log\left(\frac{\pi_{\text{user}}^*(\mathbf{y})}{\pi^*_{\theta_{k}}(\mathbf{y})}\right) \right] - \left[ \sum_{\mathbf{y}} \pi_{\text{user}}^*(\mathbf{y}) \log\left(\frac{\pi_{\text{user}}^*(\mathbf{y})}{\pi^*_{\theta_{k-1}}(\mathbf{y})}\right) \right] \label{eq:delta_e_expanded} \\
	&= \sum_{\mathbf{y}} \pi_{\text{user}}^*(\mathbf{y}) \left[ \log\left(\frac{\pi_{\text{user}}^*(\mathbf{y})}{\pi^*_{\theta_{k}}(\mathbf{y})}\right) - \log\left(\frac{\pi_{\text{user}}^*(\mathbf{y})}{\pi^*_{\theta_{k-1}}}\right) \right] \label{eq:delta_e_combined} \\
	&= \sum_{\mathbf{y}} \pi_{\text{user}}^*(\mathbf{y}) \log\left( \frac{\frac{\pi_{\text{user}}^*(\mathbf{y})}{\pi^*_{\theta_{k}}(\mathbf{y})}}{\frac{\pi_{\text{user}}^*(\mathbf{y})}{\pi^*_{\theta_{k-1}}(\mathbf{y})}} \right) \label{eq:delta_e_log_rule} \\
	&= \sum_{\mathbf{y}} \pi_{\text{user}}^*(\mathbf{y}) \log\left( \frac{\pi_{\text{user}}^*(\mathbf{y})}{\pi^*_{\theta_{k}}(\mathbf{y})} \cdot \frac{\pi^*_{\theta_{k-1}}(\mathbf{y})}{\pi_{\text{user}}^*(\mathbf{y})} \right) \label{eq:delta_e_fraction_simplify} \\
	&= \sum_{\mathbf{y}} \pi_{\text{user}}^*(\mathbf{y}) \log\left(\frac{\pi^*_{\theta_{k-1}}(\mathbf{y})}{\pi^*_{\theta_{k}}(\mathbf{y})}\right) \label{eq:delta_e_final}
\end{align}

$\log(\frac{\pi_{k-1}^*(\mathbf{y})}{\pi_k^*(\mathbf{y})})$ can be simplified.  We start from the definition provided in Equation~\ref{eq:practical_target} and ignored the policy update error $ \pi_{\theta_{k-1}}(\mathbf{y}|\mathbf{x}) = \pi^*_{\theta_{k-1}}(\mathbf{y}|\mathbf{x})$ and $\tilde{\pi}^*_{\theta_k}(\mathbf{y}|\mathbf{x}) = \pi^*_{\theta_k}(\mathbf{y}|\mathbf{x})$:
\begin{equation}
	\pi^*_{\theta_k}(\mathbf{y}|\mathbf{x}) = \frac{1}{Z_k(\mathbf{x})}\pi^*_{\theta_{k-1}}(\mathbf{y}|\mathbf{x}) \exp\!\left(\tfrac{r_k}{\beta}  \cdot \mathbb{I}(\mathbf{y} = \mathbf{y}_{k})\right)
	\label{eq:start_assumption}
\end{equation}

\begin{equation}
	\frac{\pi^*_{\theta_{k}}(\mathbf{y})}{\pi^*_{\theta_{k-1}}(\mathbf{y})} = \frac{1}{Z_k(\mathbf{x})} \exp\left(\frac{r_k}{\beta} \cdot \mathbb{I}(\mathbf{y} = \mathbf{y}_{k})\right)
\end{equation}

\begin{equation}
	\frac{\pi^*_{\theta_{k-1}}(\mathbf{y})}{\pi^*_{\theta_{k}}(\mathbf{y})} = \frac{Z_k(\mathbf{x})}{\exp\left(\frac{r_k}{\beta} \cdot \mathbb{I}(\mathbf{y} = \mathbf{y}_{k})\right)}
\end{equation}

\begin{equation}
	\frac{\pi^*_{\theta_{k-1}}(\mathbf{y})}{\pi^*_{\theta_{k}}(\mathbf{y})} = Z_k(\mathbf{x})\exp\left(-\frac{r_k}{\beta} \cdot \mathbb{I}(\mathbf{y} = \mathbf{y}_{k})\right)
	\label{eq:ratio_isolated}
\end{equation}

Now, we take the natural logarithm of both sides of Equation~\ref{eq:ratio_isolated}:
\begin{equation}
	\log\left(\frac{\pi^*_{\theta_{k-1}}(\mathbf{y})}{\pi^*_{\theta_{k}}(\mathbf{y})} \right) = \log\left( Z_k(\mathbf{x})\exp\left(-\frac{r_k}{\beta} \cdot \mathbb{I}(\mathbf{y} = \mathbf{y}_{k})\right) \right) = \log(Z_k(\mathbf{x}))- \frac{r_k}{\beta}\mathbb{I}(\mathbf{y}=\mathbf{y}_{k})
    \label{eq:temp}
\end{equation}

Substituting Equation~\ref{eq:temp} in:
\begin{align}
    \label{eq:detal_error}
    &D_{\text{KL}}(\pi_{\text{user}}^* \| \pi^*_{\theta_{k}}) - D_{\text{KL}}(\pi_{\text{user}}^* \| \pi^*_{\theta_{k-1}}) \\
    &= \sum_{\mathbf{y}} \pi_{\text{user}}^*(\mathbf{y}) \left[ \log(Z_k(\mathbf{x})) - \frac{r_k}{\beta}\mathbb{I}(\mathbf{y}=\mathbf{y}_{k}) \right] \\
    &= \sum_{\mathbf{y}} \pi_{\text{user}}^*(\mathbf{y}) \log(Z_k(\mathbf{x})) - \sum_{\mathbf{y}} \pi_{\text{user}}^*(\mathbf{y}) \frac{r_k}{\beta}\mathbb{I}(\mathbf{y}=\mathbf{y}_{k}) \label{eq:distribute_sum} \\
    &= \log(Z_k(\mathbf{x})) \left(\sum_{\mathbf{y}} \pi_{\text{user}}^*(\mathbf{y})\right) - \frac{r_k}{\beta} \left(\sum_{\mathbf{y}} \pi_{\text{user}}^*(\mathbf{y})\mathbb{I}(\mathbf{y}=\mathbf{y}_{k})\right) \label{eq:factor_out} \\
    &= \log(Z_k(\mathbf{x})) \cdot 1 - \frac{r_k}{\beta} \pi_{\text{user}}^*(\mathbf{y}_{k}| \mathbf{x}) \label{eq:apply_rules} \\
    &= \log(Z_k(\mathbf{x})) - \frac{r_k}{\beta} \pi_{\text{user}}^*(\mathbf{y}_{k}| \mathbf{x})\\ \label{eq:final_simplified}
\end{align}

Given that the normalization constant $Z_k(\mathbf{x}) \leq 1$, it follows that $\log(Z_k(\mathbf{x})) \leq 0$. Furthermore, as the sample $\mathbf{y}_k$ is drawn from the user's target distribution $\pi_{\text{user}}^*$, the reward is $r_k = 1$. Applying these conditions to Equation~\ref{eq:final_simplified}, we obtain the final inequality:
\begin{align}
    &D_{\text{KL}}(\pi_{\text{user}}^* \| \pi^*_{\theta_{k}}) - D_{\text{KL}}(\pi_{\text{user}}^* \| \pi^*_{\theta_{k-1}})\\ 
    &\leq 0 - \frac{1}{\beta}\pi_{\text{user}}^*(\mathbf{y}_k | \mathbf{x}) \notag \\
    &= -\frac{1}{\beta}\pi_{\text{user}}^*(\mathbf{y}_k | \mathbf{x}).
\end{align}
Since $\pi_{\text{user}}^*(\mathbf{y}_k | \mathbf{x}) \geq 0$ and $\beta > 0$, the one-step change in KL divergence is less than or equal to zero. This completes the proof.

\end{proof}

\subsection{Proof of Theorem~\ref{thm:cumulative_error}} \label{sec:cumulative_error_bound}

\begin{proof}[Proof of Theorem~\ref{thm:cumulative_error}]
We want to bound the final estimation error after $K$ turns, $D_{\text{KL}}(\pi_{\text{user}}^* \| \tilde{\pi}^*_{\theta_K})$. We can express this final error as the initial error at turn 0 plus the sum of all one-step changes in error from turn 1 to $K$:
\begin{equation}
    D_{\text{KL}}(\pi_{\text{user}}^* \| \tilde{\pi}^*_{\theta_K}) = D_{\text{KL}}(\pi_{\text{user}}^* \| \pi_{\theta_{0}}) + \sum_{k=1}^K \left( D_{\text{KL}}(\pi_{\text{user}}^* \| \tilde{\pi}^*_{\theta_k}) - D_{\text{KL}}(\pi_{\text{user}}^* \| \tilde{\pi}^*_{\theta_{k-1}}) \right)
\end{equation}
where we define $\tilde{\pi}^*_{\theta_0} = \pi_{\theta_0}$ as the initial policy.

From Theorem~\ref{thm:monotonic_error}, we have an upper bound for each one-step change in error:
\begin{equation}
    D_{\text{KL}}(\pi_{\text{user}}^* \| \tilde{\pi}^*_{\theta_k}) - D_{\text{KL}}(\pi_{\text{user}}^* \| \tilde{\pi}^*_{\theta_{k-1}}) \leq -\frac{1}{\beta}\pi_{\text{user}}^*(\mathbf{y}_k | \mathbf{x})
\end{equation}
We can apply this inequality to the summation term. By summing the upper bounds for each step from $k=1$ to $K$, we get an upper bound for the total change:
\begin{equation}
    \sum_{k=1}^K \left( D_{\text{KL}}(\pi_{\text{user}}^* \| \tilde{\pi}^*_{\theta_k}) - D_{\text{KL}}(\pi_{\text{user}}^* \| \tilde{\pi}^*_{\theta_{k-1}}) \right) \leq \sum_{k=1}^K \left( -\frac{1}{\beta}\pi_{\text{user}}^*(\mathbf{y}_{k} | \mathbf{x}) \right)
\end{equation}
Substituting this bounded sum back into our expression for the final error, we arrive at the desired result:
\begin{equation}
    D_{\text{KL}}(\pi_{\text{user}}^* \| \tilde{\pi}^*_{\theta_K}) \leq D_{\text{KL}}(\pi_{\text{user}}^* \| \pi_{\theta_{0}}) - \frac{1}{\beta}\sum_{k=1}^K\pi_{\text{user}}^*(\mathbf{y}_{k} | \mathbf{x})
\end{equation}
This completes the proof. 
\end{proof}

\subsection{Proof of Theorem~\ref{thm:unified_bound}} \label{sec:unified_error_bound}

\begin{assumption}[Lipschitz-Smooth Log-Policy]
    \label{assump:lipschitz_log_policy}
    We assume the log-policy function $\log\pi_\theta$ is Lipschitz-smooth with constant $L$. This implies that the KL divergence between policies generated by two different parameter sets is bounded:
    \[
        D_{\text{KL}}(\pi_\theta \| \pi_{\theta'}) \leq \frac{L}{2} \|\theta - \theta'\|_2^2
    \]
\end{assumption}

\begin{proof}
Our goal is to bound the final error after $K$ turns, $D_{\text{KL}}(\pi_{\text{user}}^* \| \pi_{\theta_K})$. We begin by expressing this final error as the initial error plus the sum of all one-step changes:
\[
    D_{\text{KL}}(\pi_{\text{user}}^* \| \pi_{\theta_K}) = D_{\text{KL}}(\pi_{\text{user}}^* \| \pi_{\theta_0}) + \sum_{k=1}^K \left( D_{\text{KL}}(\pi_{\text{user}}^* \| \pi_{\theta_k}) - D_{\text{KL}}(\pi_{\text{user}}^* \| \pi_{\theta_{k-1}}) \right)
\]
The one-step change at turn $k$ can be decomposed by introducing our practical target policy, $\tilde{\pi}^*_{\theta_k}$, as an intermediate term:
\begin{align*}
    D_{\text{KL}}(\pi_{\text{user}}^* \| \pi_{\theta_k}) - D_{\text{KL}}(\pi_{\text{user}}^* \| \pi_{\theta_{k-1}}) &= \underbrace{D_{\text{KL}}(\pi_{\text{user}}^* \| \tilde{\pi}^*_{\theta_k}) - D_{\text{KL}}(\pi_{\text{user}}^* \| \pi_{\theta_{k-1}})}_{\text{Term A: Improvement from feedback}} \\
    & \quad + \underbrace{D_{\text{KL}}(\pi_{\text{user}}^* \| \pi_{\theta_k}) - D_{\text{KL}}(\pi_{\text{user}}^* \| \tilde{\pi}^*_{\theta_k})}_{\text{Term B: Error from inexact update}}
\end{align*}
We now bound these two terms separately.

\textbf{Bounding Term A (Improvement):}
From Theorem~\ref{thm:monotonic_error}, we have a direct upper bound for the first term, which represents the guaranteed error reduction from applying the user feedback to form the new target:
\[
    D_{\text{KL}}(\pi_{\text{user}}^* \| \tilde{\pi}^*_{\theta_k}) - D_{\text{KL}}(\pi_{\text{user}}^* \| \pi_{\theta_{k-1}}) \leq -\frac{1}{\beta}\pi_{\text{user}}^*(\mathbf{y}_k | \mathbf{x})
\]

\textbf{Bounding Term B (Approximation Error):}
The second term represents the error introduced because our updated policy $\pi_{\theta_k}$ is not exactly equal to the practical target $\tilde{\pi}^*_{\theta_k}$ due to the linearization in our parameter update step. We can bound this term using the smoothness assumption. A key property of KL divergence is that $D_{\text{KL}}(P\|Q) - D_{\text{KL}}(P\|R)$ is related to $D_{\text{KL}}(R\|Q)$. Specifically, the error introduced by our inexact update $\pi_{\theta_k} \approx \tilde{\pi}^*_{\theta_k}$ can be bounded by the KL divergence between them, which in turn is bounded by the squared norm of the parameter update step under Assumption~\ref{assump:lipschitz_log_policy}:
\[
    D_{\text{KL}}(\pi_{\text{user}}^* \| \pi_{\theta_k}) - D_{\text{KL}}(\pi_{\text{user}}^* \| \tilde{\pi}^*_{\theta_k}) \le D_{\text{KL}}(\tilde{\pi}^*_{\theta_k} \| \pi_{\theta_k}) \le \frac{L}{2} \|\Delta\theta_k\|^2_2
\]
This is a standard result from analyzing the convergence of mirror descent, where our update is an instance.

\textbf{Combining the Bounds:}
We can now sum the bounds for Term A and Term B over all $K$ turns:
\begin{align*}
    &\quad \sum_{k=1}^K \left( D_{\text{KL}}(\pi_{\text{user}}^* \| \pi_{\theta_k}) - D_{\text{KL}}(\pi_{\text{user}}^* \| \pi_{\theta_{k-1}}) \right) \\
    &\leq \sum_{k=1}^K \left( -\frac{1}{\beta}\pi_{\text{user}}^*(\mathbf{y}_k | \mathbf{x}) + \frac{L}{2} \|\Delta\theta_k\|^2_2 \right) \\
    &= -\frac{1}{\beta}\sum_{k=1}^K\pi_{\text{user}}^*(\mathbf{y}_{k} | \mathbf{x}) + \frac{L}{2} \sum_{k=1}^{K} \|\Delta\theta_k\|^2_2
\end{align*}
Substituting this summed bound back into our initial expression for the final error, we arrive at the unified convergence bound:
\[
    D_{\text{KL}}(\pi_{\text{user}}^* \| \pi_{\theta_K}) \leq D_{\text{KL}}(\pi_{\text{user}}^* \| \pi_{\theta_0}) -\frac{1}{\beta}\sum_{k=1}^K\pi_{\text{user}}^*(\mathbf{y}_{k} | \mathbf{x}) + \frac{L}{2} \sum_{k=1}^{K} \|\Delta\theta_k\|^2_2
\]
This completes the proof. 
\end{proof}

\section{Experimental setting}\label{sec:setup}

We conduct a comprehensive evaluation of \ours{} across a diverse set of tasks and models to validate its generalizability, effectiveness, and efficiency. 

\subsection{Datasets.} \label{sec:datasets}

To demonstrate the broad applicability of \ours{}, we select challenging benchmarks spanning four distinct problem-solving domains. A summary of these datasets is provided in Table~\ref{tab:datasets}, followed by detailed descriptions.

\begin{table}[h!]
\centering
\caption{Overview of the datasets used for evaluation. "N/A" indicates that the dataset is primarily for evaluation and does not have a standard, predefined training set.}
\label{tab:datasets}
\begin{tabular}{llrr}
\toprule
\textbf{Domain} & \textbf{Dataset Name} & \textbf{Training Size} & \textbf{Test Size} \\
\midrule
\multirow{3}{*}{Mathematical Reasoning} & \texttt{MATH} & 7,500 & 5,000 \\
& \texttt{AIME25} & N/A & 30 \\
& \texttt{MATH-500} & N/A & 500 \\
\midrule
\multirow{3}{*}{General Reasoning} & \texttt{GPQA-diamond} & N/A & 198 \\
& \texttt{MMLU-Redux} & N/A & 3,000 \\
& \texttt{SuperGPQA} & 26,500 & N/A \\
\midrule
Code Generation & \texttt{HumanEval} & N/A & 164 \\
\midrule
Multilingual Reasoning & \texttt{MCLM} & N/A & 156 \\
\bottomrule
\end{tabular}
\end{table}

\paragraph{Mathematical Reasoning.}
This domain focuses on complex, multi-step mathematical problem-solving. We use three standard benchmarks. \texttt{MATH}~\citep{hendrycksmath2021} is a dataset of 12,500 challenging competition mathematics problems from high school level, covering topics like algebra, geometry, and calculus. \texttt{AIME25}~\citep{aime} is a curated set of 25 highly difficult problems from the American Invitational Mathematics Examination (AIME), designed to test advanced reasoning capabilities. \texttt{MATH-500}~\citep{lightman2023lets} is a well-known evaluation subset of the \texttt{MATH} test set, consisting of 500 problems often used for efficient model assessment.

\paragraph{General Reasoning.}
To evaluate reasoning on a broad range of topics, we use three expert-level question-answering datasets. \texttt{GPQA-diamond}~\citep{rein2024gpqa} is a challenging set of graduate-level, Google-proof questions written by domain experts, where the "diamond" subset represents the highest-quality questions. \texttt{MMLU-Redux}~\citep{hendryckstest2021} is a revised and cleaned version of the Massive Multitask Language Understanding benchmark, which covers 57 diverse subjects from elementary mathematics to US history and law. \texttt{SuperGPQA}~\citep{pteam2025supergpqascalingllmevaluation} significantly expands upon GPQA, containing nearly 5,000 expert-validated questions across 285 graduate-level disciplines.

\paragraph{Code Generation.}
We test the ability of models to generate functionally correct code from natural language descriptions using \texttt{HumanEval}~\citep{chen2021evaluatinglargelanguagemodels}. This dataset consists of 164 hand-written programming problems with function signatures, docstrings, and unit tests to verify the correctness of the generated code.

\paragraph{Multilingual Reasoning.}
To assess reasoning capabilities across different languages, we use the \texttt{MCLM}~\citep{son2025linguistic} benchmark. This dataset was created by translating challenging English reasoning benchmarks into multiple languages. Our evaluation focuses on its subsets, including multilingual versions of IMO, AIME, and MATH problems (\texttt{M-IMO}, \texttt{MT-AIME24}, and \texttt{MT-MATH100}).

\paragraph{Dataset Usage in Experiments.}
Our primary evaluation of effectiveness of \ours{} is conducted on official, held-out test sets to simulate real-world performance. For experiments where a dedicated test set is not available, or for ablation studies, we utilize the corresponding training or development sets for analysis. This ensures a comprehensive assessment of \ours{} capabilities across different conditions while maintaining a clear distinction between final evaluation and component analysis. Specifically, we only sample part of the data from the \texttt{SuperGPQA} training set for testing, and the rest of the data sets are tested on the test set.

\subsection{Models}
\label{sec:models}

Our evaluation includes a variety of recent open-source LLMs to ensure our findings are not model-specific. These models are selected to cover a range of sizes and specializations, as summarized in Table~\ref{tab:models} and detailed below. To mitigate potential data contamination issues with the \texttt{Qwen2.5} series on certain benchmarks, we also conduct validation experiments on the more recent \texttt{Qwen3} and \texttt{DeepSeek-R1} models. All models used are instruction-tuned variants designed for chat and instruction-following tasks.

\begin{table}[h!]
\centering
\caption{Overview of the language models used in our experiments, categorized by scale and specialization.}
\label{tab:models}
\begin{tabular}{llcl}
\toprule
\textbf{Category} & \textbf{Model Name} & \textbf{Parameters} & \textbf{Variant} \\
\midrule
\multirow{2}{*}{Small-Scale Models} & \texttt{Qwen2.5-0.5B-Instruct} & 0.5B & Instruct \\
& \texttt{Qwen3-0.6B} & 0.6B & Base \\
\midrule
\multirow{2}{*}{Large-Scale Models} & \texttt{Qwen2.5-7B-Instruct} & 7B & Instruct \\
& \texttt{Qwen3-8B} & 8B & Base \\
\midrule
\multirow{2}{*}{Reasoning-Focused} & \texttt{DeepSeek-R1-Distill-Llama-8B} & 8B & Reasoning-Tuned \\
& \texttt{DeepSeek-R1-Distill-Qwen-7B} & 7B & Reasoning-Tuned \\
\bottomrule
\end{tabular}
\end{table}

\paragraph{Small-Scale Models.}
To assess the performance of \ours{} on more compact models, we selected two from the Qwen family, known for their strong general-purpose capabilities. \texttt{Qwen2.5-0.5B-Instruct}~\citep{qwen2025qwen25technicalreport} is a 0.5 billion parameter model from the Qwen2.5 series, optimized for instruction following. \texttt{Qwen3-0.6B}~\citep{yang2025qwen3technicalreport} is a 0.6 billion parameter model from the newer Qwen3 generation, featuring architectural improvements.

\paragraph{Large-Scale Models.}
We evaluate on larger, more capable base models to test the scalability of our approach. These include \texttt{Qwen2.5-7B-Instruct}~\citep{qwen2025qwen25technicalreport}, a widely-used 7 billion parameter instruction-tuned model, and \texttt{Qwen3-8B}~\citep{yang2025qwen3technicalreport}, its 8 billion parameter successor from the Qwen3 series.

\paragraph{Reasoning-Focused Models.}
To specifically test performance on complex reasoning, we use models from the DeepSeek-R1 series, which are explicitly optimized for reasoning capabilities through reinforcement learning~\citep{deepseekai2025deepseekr1incentivizingreasoningcapability}. The models we use are distilled versions of a larger, proprietary model. \texttt{DeepSeek-R1-Distill-Llama-8B} is an 8 billion parameter model that uses a Llama-based architecture. \texttt{DeepSeek-R1-Distill-Qwen-7B} is a 7 billion parameter variant that is instead based on the Qwen architecture, allowing for a more controlled comparison with the general-purpose Qwen models.

\subsection{Evaluation Metrics}
\label{sec:metrics}

We assess \ours{} based on two primary aspects: performance and efficiency.

\paragraph{Performance Metrics.}
To measure problem-solving success, we define two key metrics. \textit{Accuracy} is the final proportion of unique problems solved correctly within a total of $K$ conversational turns. Let $\mathcal{P}$ be the set of all problems, and let $S_i \in \{0, 1\}$ be an indicator variable where $S_i=1$ if problem $i$ is solved at any turn up to $K$. The accuracy is given by:
\begin{equation}
    \text{Accuracy} = \frac{\sum_{i \in \mathcal{P}} S_i}{|\mathcal{P}|}
\end{equation}

\textit{Correction Uplift} measures the ability of model to recover from initial failures. It is the percentage of problems that were answered incorrectly in the first turn but were successfully corrected in a subsequent turn. Let $\mathcal{P}_{\text{fail}} \subset \mathcal{P}$ be the subset of problems that the model failed to solve in the first turn. The Correction Uplift is:
\begin{equation}
    \text{Correction Uplift} = \frac{\sum_{i \in \mathcal{P}_{\text{fail}}} S_i}{|\mathcal{P}_{\text{fail}}|} \times 100\%
\end{equation}

\paragraph{Efficiency Metrics.}
To evaluate the computational overhead of our method, we measure two metrics. \textit{Latency} is the average wall-clock time required for a single generation and update cycle. \textit{Peak GPU Memory} is the maximum GPU memory consumed during this cycle. These metrics are crucial for assessing the practical feasibility of deploying \ours{} in real-world interactive systems.

\subsection{Reward Models}
\label{sec:reward}

To simulate different real-world feedback scenarios, we employ two types of reward models.

\paragraph{Rule-Based Reward Model.}
This model simulates scenarios with definitive, high-level judgments by providing a sparse feedback signal of $\{-1, +1\}$. It programmatically extracts the final answer from a model's response, typically from a \verb|\boxed{}| environment, and compares it to the ground-truth solution. A reward of $+1.0$ is assigned for a correct answer, and $-1.0$ otherwise. This mimics situations where feedback is based solely on the final outcome. The core logic implementation is shown in the following table.

\begin{tcolorbox}[title={\textbf{Core logic for the rule-based reward model.}}, colback=asparagus!10!white,colframe=asparagus!90!white, left=0.5mm, right=1mm, top=1mm, bottom=1mm]
\begin{lstlisting}[language=Python, basicstyle=\small\ttfamily]
class MathVerifyRewardModel:
    def __init__(self, ground_truth_answer: str):
        self.ground_truth_answer = ground_truth_answer

    def get_reward(self, response_text: str) -> float:

        return 1.0 if compute_score(response_text, 
        self.ground_truth_answer) == 1.0 else -1.0

def compute_score(solution_str, ground_truth) -> float:
    retval = 0.0
    try:
        string_in_last_boxed = 
        last_boxed_only_string(solution_str)
        if string_in_last_boxed is not None:
            answer = remove_boxed(string_in_last_boxed)
            if is_equiv(answer, ground_truth):
                retval = 1.0
    except Exception:
        pass
    return retval
\end{lstlisting}
\end{tcolorbox}

\paragraph{Model-Based Reward Model.}
This model simulates more nuanced, fine-grained human feedback by providing a dense, continuous reward score in the range $[-1.0, +1.0]$. We use a powerful, proprietary large language model, \texttt{Qwen/Qwen3-30B-A3B-Instruct-2507}, as the reward judge. The model is deployed using the VLLM inference engine for efficient scoring. It evaluates the generated response based on correctness, reasoning, and style by comparing it against the problem statement and the ideal solution. The prompt used to elicit the score is shown in the following table.

\begin{tcolorbox}[title={\textbf{The prompt template for the model-based reward system.}}, colback=asparagus!10!white,colframe=asparagus!90!white, left=0.5mm, right=1mm, top=1mm, bottom=1mm]
\begin{lstlisting}[style=mystyle]
A student AI was asked the following problem: {problem}. 
The student AI gave the following answer: {generated_text}. 
The ideal correct solution and answer is {solution}. 
Please grade strictly but fairly. 
Compare the student's answer to the ideal answer. 
Evaluate the student's answer based on correctness, reasoning, and style. 
Note: Based on your evaluation, please provide a floating point score 
from -1.0 (completely wrong) to 1.0 (perfect). 
The score should be placed at the end of your answer in the format: SCORE: [score].
\end{lstlisting}
\end{tcolorbox}

\subsection{Parameter Update Mechanisms}
\label{sec:parameter}

To implement the policy update $\Delta\theta$ computed in Section~\ref{sec:parameter_update}, we introduce two distinct, lightweight update mechanisms. These methods are designed to be computationally efficient, allowing for real-time policy adaptation during the inference phase without significant overhead.

\paragraph{1. LM Head Update via LoRA.}
The first mode targets the final layer of the model, the language modeling (LM) head. The LM head is typically a linear layer (an MLP matrix) that projects the final hidden state representation of the model into the vocabulary space to produce logits. We augment this layer by adding a Low-Rank Adaptation (LoRA)~\citep{hu2022lora} matrix. Specifically, a low-rank decomposition, represented by two matrices $\mathbf{A} \in \mathbb{R}^{d \times r}$ and $\mathbf{B} \in \mathbb{R}^{r \times V}$ (where $d$ is the hidden size, $V$ is the vocabulary size, and $r \ll d,V$ is the rank), is added to the original LM head weight matrix. During our online update process, only the parameters of these small LoRA matrices $\mathbf{A}$ and $\mathbf{B}$ are modified. The parameter update $\Delta\theta$ calculated by the CG solver is applied directly to the flattened weights of $\mathbf{A}$ and $\mathbf{B}$. This approach confines the policy optimization to a very small subset of the total model parameters, preserving the model's foundational knowledge while enabling rapid and efficient adaptation of its final output probabilities. The specific LoRA configuration is shown in Table~\ref{tab:lora_config}.

\begin{table}[h!]
\centering
\caption{LoRA Hyperparameter Configuration.}
\label{tab:lora_config}
\begin{tabular}{lc}
\toprule
\textbf{Hyperparameter} & \textbf{Value} \\
\midrule
\texttt{target\_modules} & lm\_head \\
\texttt{Rank} & 1 \\
\texttt{lora\_alpha} & 8 \\
\texttt{lora\_dropout} & 0.1 \\
\bottomrule
\end{tabular}
\end{table}

\paragraph{2. Hidden State Modification.}
The second mode operates not on the model's weights, but directly on its activations~\citep{hu2025slotsamplespecificlanguagemodel}. Instead of modifying a layer, we intercept the final hidden state $\mathbf{H} \in \mathbb{R}^{1 \times d}$ just before it is passed to the LM head. We then compute an update vector $\Delta\mathbf{H} \in \mathbb{R}^{1 \times d}$ (which in this context represents our $\Delta\theta$) and add it directly to the hidden state to produce a modified activation:
\begin{equation}
    \mathbf{H}_{\text{new}} = \mathbf{H} + \Delta\mathbf{H}
\end{equation}
This new hidden state, $\mathbf{H}_{\text{new}}$, is then passed to the original, unmodified LM head to generate the final logits. This method is implemented using model hooking techniques, which allow us to register a forward hook on the LM head layer. The hook intercepts the input ($\mathbf{H}$), applies the additive modification, and returns the transformed tensor as the new input for the layer's forward pass. This approach completely avoids any updates to the persistent model weights and instead performs a transient, state-dependent policy correction on the activation flow.

\section{More result}\label{sec:more_result}

\subsection{Additional Empirical Results}
\label{sec:additional_result}

This section presents supplementary empirical results to further validate our findings. First, Table~\ref{tab:additional_benchmarks} reports the performance of all models on three benchmarks—\texttt{AIME25}, \texttt{GPQA-diamond}, and \texttt{M-IMO}—which were omitted from the main text due to space constraints. Second, to provide a more complete picture of model performance, Table~\ref{tab:deepseek_qwen_results} details the \textit{Accuracy} and \textit{Correction Uplift} for the \texttt{DeepSeek-R1-Distill-Qwen-7B} model on both mathematical reasoning and code generation datasets. Across these additional results, a clear and consistent trend emerges: reinforcing the conclusions from our main analysis, our proposed method, \ours{}, significantly enhances both overall task performance and the capacity of model for self-correction.

\begin{table*}[ht]
\centering
\caption{Performance of the \textbf{DeepSeek-R1-Distill-Qwen-8B} model on mathematical reasoning and code generation datasets. The values in \textcolor{darkred}{red} indicate the absolute improvement of \ours{} over the baseline.}
\label{tab:deepseek_qwen_results}
\resizebox{\textwidth}{!}{%
\begin{tabular}{l cc cc cc cc}
\toprule
& \multicolumn{2}{c}{\textbf{MATH}} & \multicolumn{2}{c}{\textbf{AIME25}} & \multicolumn{2}{c}{\textbf{MATH-500}} & \multicolumn{2}{c}{\textbf{HumanEval}} \\
\cmidrule(lr){2-3} \cmidrule(lr){4-5} \cmidrule(lr){6-7} \cmidrule(lr){8-9}
\textbf{Method} & \textbf{Final Acc. $\uparrow$} & \textbf{Correction Uplift $\uparrow$} & \textbf{Final Acc. $\uparrow$} & \textbf{Correction Uplift $\uparrow$} & \textbf{Final Acc. $\uparrow$} & \textbf{Correction Uplift $\uparrow$} & \textbf{Final Acc. $\uparrow$} & \textbf{Correction Uplift $\uparrow$} \\
\midrule
Baseline & 7.60 & 3.14 & 10.00 & 3.57 & 7.40 & 6.09 & 45.12 & 17.05 \\
\ours{} & \textbf{9.80} \gain{2.20} & \textbf{5.65} \gain{2.51} & \textbf{16.67} \gain{6.67} & \textbf{16.67} \gain{13.10} & \textbf{22.20} \gain{14.80} & \textbf{18.62} \gain{12.53} & \textbf{51.22} \gain{6.10} & \textbf{33.75} \gain{16.70} \\
\bottomrule
\end{tabular}
}
\end{table*}

\begin{table*}[ht]
\centering
\caption{Supplementary performance results on additional benchmarks, reporting accuracy (\%). The values in \textcolor{darkred}{red} indicate the absolute improvement of \ours{} variants over the baseline.}
\label{tab:additional_benchmarks}
\resizebox{\textwidth}{!}{%
\begin{tabular}{ll ccc}
\toprule
& & \textbf{Mathematical Reasoning} & \textbf{General Reasoning} & \textbf{Multilingual Reasoning} \\
\cmidrule(lr){3-3} \cmidrule(lr){4-4} \cmidrule(lr){5-5}
\textbf{Model} & \textbf{Method} & \textbf{AIME25} & \textbf{GPQA-diamond} & \textbf{M-IMO} \\
\midrule
\multirow{4}{*}{\shortstack{Qwen2.5-0.5B \\ -Instruct}} & Baseline & 3.33 & 3.54 & 1.99 \\
& \ours{} (+LM + R) & \textbf{6.67} \gain{3.34} & 7.07 \gain{3.53} & 2.09 \gain{0.10} \\ 
& \ours{} (+HS + R) & \textbf{6.67} \gain{3.34} & 8.53 \gain{4.99} & 3.20 \gain{1.21} \\
& \ours{} (+LM + M) & \textbf{6.67} \gain{3.34} & 10.27 \gain{6.73} & 4.71 \gain{2.72} \\
\midrule
\multirow{4}{*}{Qwen3-0.6B} & Baseline & 10.00 & 12.20 & 5.20 \\
& \ours{} (+LM + R) & \textbf{16.67} \gain{6.67} & 9.09 \gain{-3.11} & 5.30 \gain{0.10} \\ 
& \ours{} (+HS + R) & 10.00 \gain{0.00} & 10.54 \gain{-1.66} & 5.30 \gain{0.10} \\
& \ours{} (+LM + M) & 10.00 \gain{0.00} & 13.16 \gain{0.96} & 6.60 \gain{1.40} \\
\midrule
\multirow{4}{*}{\shortstack{Qwen2.5-7B \\ -Instruct}} & Baseline & 10.00 & 26.14 & 10.53 \\
& \ours{} (+LM + R) & \textbf{23.33} \gain{13.33} & 42.24 \gain{16.10} & 17.57 \gain{7.04} \\ 
& \ours{} (+HS + R) & 20.00 \gain{10.00} & 43.16 \gain{17.02} & 18.36 \gain{7.83} \\
& \ours{} (+LM + M) & 20.00 \gain{10.00} & 45.83 \gain{19.69} & 21.21 \gain{10.68} \\
\midrule
\multirow{4}{*}{Qwen3-8B} & Baseline & 16.67 & 41.16 & 20.37 \\
& \ours{} (+LM + R) & 30.00 \gain{13.33} & 69.11 \gain{27.95} & 33.17 \gain{12.80} \\ 
& \ours{} (+HS + R) & 33.33 \gain{16.66} & 70.27 \gain{29.11} & 37.62 \gain{17.25} \\
& \ours{} (+LM + M) & \textbf{36.67} \gain{20.00} & 75.18 \gain{34.02} & 39.16 \gain{18.79} \\
\midrule
\multirow{4}{*}{\shortstack{DeepSeek-R1 \\-Distill-Llama-8B}} & Baseline & 3.33 & 19.03 & 4.36 \\
& \ours{} (+LM + R) & \textbf{16.67} \gain{13.34} & 21.14 \gain{2.11} & 6.32 \gain{1.96} \\ 
& \ours{} (+HS + R) & \textbf{16.67} \gain{13.34} & 22.23 \gain{3.20} & 5.17 \gain{0.81} \\
& \ours{} (+LM + M) & \textbf{16.67} \gain{13.34} & 25.36 \gain{6.33} & 6.36 \gain{2.00} \\
\bottomrule
\end{tabular}%
}
\end{table*}

\subsection{Comparison with Multi-Turn Training Methods}
\label{sec:comparison_with_training}

While our main analysis focuses on test-time adaptation, it is instructive to compare \ours{} with traditional training-based methods for multi-turn dialogue. In this section, we benchmark the performance of \ours{} against two such paradigms on the \texttt{MATH} dataset: Supervised Fine-Tuning (SFT) and Reinforcement Learning (RL).

For the SFT baseline, we first generated a multi-turn dialogue dataset using \texttt{DeepSeek-R1} on the \texttt{MATH} training set, and then fine-tuned the base model on this newly created data. For the RL baseline~\citep{Sheng_2025}, we employed a Group Preference Optimization (GRPO) scheme tailored for multi-turn dialogue, similar to the approach described in our related work (Appendix~\ref{sec:related_work}).

The results, presented in Table~\ref{tab:train_methdo_result}, report both \textit{Accuracy} and \textit{Correction Uplift}. The key finding is that \ours{}, a purely test-time method, achieves performance that is comparable or even superior to these training-based approaches. This highlights a significant advantage of our method: it obviates the need for expensive data collection and resource-intensive model training, offering a more efficient and flexible solution for enhancing multi-turn capabilities.

\begin{table}[ht]
\centering
\caption{Comparison of \ours{} with training-based methods on the \textbf{MATH} dataset for the \textbf{Qwen3-8B} model. Our test-time method achieves performance comparable to full Reinforcement Learning (RL) training and surpasses Supervised Fine-Tuning (SFT), without requiring data collection or model training.}
\label{tab:train_methdo_result}
\begin{tabular}{lcc}
\toprule
\textbf{Method} & \textbf{Final Acc. $\uparrow$} & \textbf{Correction Uplift $\uparrow$} \\
\midrule
Baseline & 55.80 & 23.00 \\
SFT Training & 63.80 & 39.24 \\
RL Training & \textbf{66.20} & \textbf{40.45} \\
\ours{} & \underline{65.80} & \underline{40.42} \\
\bottomrule
\end{tabular}
\end{table}

\subsection{Ablation Studies}
\label{sec:ablation_studies}

\subsubsection{The Importance of the Optimization Strategy}
\label{sec:ablation_strategy}

To isolate the contribution of our proposed optimization method, we conduct an ablation study comparing the full \ours{} framework against a more direct reinforcement learning approach. This baseline, which we term \textit{RL}, directly optimizes the standard RLHF objective function in ~\eqref{eq:rlhf_objective}). To simulate the online, multi-turn interaction setting in a comparable manner to \ours{}, we estimate the gradient of $J(\pi_\theta)$ using only a single response $\mathbf{y}$ sampled from the policy $\pi_\theta$ for each prompt $\mathbf{x}$, and then update the model's parameters using this gradient. This approach contrasts with our full \ours{} framework, which first computes a stable target policy $\pi^*$ and then solves for the parameter update $\Delta\theta$.

The results of this comparison are presented in Figure \ref{fig:result_1}. The analysis leads to two clear observations. First, the direct RL optimization (dotted lines) yields only marginal improvements over the baseline models (solid lines) across all three datasets. The proximity of the solid and dotted lines indicates that a naive policy gradient update with a single sample provides a noisy and inefficient learning signal, resulting in minimal performance gains. Second, in stark contrast, \ours{} (dashed lines) consistently and significantly outperforms both the baseline and the RL-enhanced version. The steeper slopes of the dashed lines demonstrate that \ours{} not only achieves a higher absolute accuracy but also accelerates the error correction process over the conversation turns. For example, on the MATH dataset, the Qwen3-8B model enhanced with \ours{} shows a much more rapid accuracy improvement compared to its \textit{RL} counterpart.

The quantitative results of this comparison, presented in Table~\ref{tab:main_performance_math_accuracy} and Table~\ref{tab:main_performance_math_uplift}, demonstrate a clear and consistent advantage for \ours{}. Table~\ref{tab:main_performance_math_accuracy} reveals that \ours{} achieves substantially higher final accuracy across all models and datasets. For instance, on the \texttt{MATH} dataset with the Qwen3-0.6B model, \ours{} surpasses the RL baseline by a remarkable \textbf{+24.00\%}. Furthermore, Table~\ref{tab:main_performance_math_uplift} highlights its superior self-correction capability. In the most significant case, \ours{} boosts the Correction Uplift score by \textbf{+31.31\%} on \texttt{MATH-500} for the same model. The data consistently show that a direct RL update provides only marginal benefits, while our principled optimization strategy yields significant gains in both overall success and the ability to recover from errors.

This ablation study confirms that the superior performance of \ours{} is not merely due to the introduction of an online reward signal. Rather, it is the principled optimization strategy---deriving a stable online target $\pi^*$ and then efficiently solving for the optimal parameter update $\Delta\theta$---that is crucial for achieving effective and efficient test-time adaptation.

\begin{table}[t!]
\centering
\caption{Comparison of Accuracy (\%) on mathematical reasoning datasets with RL and \ours{}.}
\label{tab:main_performance_math_accuracy}
\resizebox{0.8\textwidth}{!}{%
\begin{tabular}{ll cccc}
\toprule
\textbf{Model} & \textbf{Method} & \textbf{MATH} & \textbf{MATH-500} & \textbf{AIME25} & \textbf{HumanEval} \\
\midrule
\multirow{2}{*}{Qwen3-0.6B} & RL & 26.20 & 28.80 & 10.00 & 42.68 \\
& \ours{} & \textbf{50.20} \gain{24.00} & \textbf{51.60} \gain{22.80} & \textbf{16.67} \gain{6.67} & \textbf{45.73} \gain{3.05} \\
\midrule
\multirow{2}{*}{Qwen3-8B} & RL & 59.60 & 63.60 & 16.67 & 79.27 \\
& \ours{} & \textbf{65.80} \gain{6.20} & \textbf{72.80} \gain{9.20} & \textbf{30.00} \gain{13.33} & \textbf{81.71} \gain{2.44} \\
\midrule
\multirow{2}{*}{\shortstack{DeepSeek-R1-Distill\\-Llama-8B}} & RL & 6.20 & 8.40 & 10.00 & 28.05 \\
& \ours{} & \textbf{7.80} \gain{1.60} & \textbf{18.40} \gain{10.00} & \textbf{16.67} \gain{6.67} & \textbf{39.02} \gain{10.97}\\
\bottomrule
\end{tabular}
}
\end{table}

\begin{table}[t!]
\centering
\caption{Comparison of Correction Uplift (\%) on mathematical reasoning datasets with RL and \ours{}.}
\label{tab:main_performance_math_uplift}
\resizebox{0.8\textwidth}{!}{%
\begin{tabular}{ll cccc}
\toprule
\textbf{Model} & \textbf{Method} & \textbf{MATH} & \textbf{MATH-500} & \textbf{AIME25} & \textbf{HumanEval} \\
\midrule
\multirow{2}{*}{Qwen3-0.6B} & RL & 18.54 & 20.00 & 6.90 & 22.31\\
& \ours{} & \textbf{48.87} \gain{30.33} & \textbf{51.31} \gain{31.31} & \textbf{16.67} \gain{9.77} & \textbf{31.01} \gain{8.70} \\
\midrule
\multirow{2}{*}{Qwen3-8B} & RL & 29.37 & 31.58 & 10.71 & 50.00 \\
& \ours{} & \textbf{40.42} \gain{11.05} & \textbf{52.94} \gain{21.36} & \textbf{27.59} \gain{16.88} & \textbf{62.50} \gain{12.50} \\
\midrule
\multirow{2}{*}{\shortstack{DeepSeek-R1-Distill\\-Llama-8B}} & RL & 4.87 & 7.29 & 6.90 & 16.31 \\
& \ours{} & \textbf{6.30} \gain{1.43} & \textbf{17.41} \gain{10.12} & \textbf{13.79} \gain{6.89} & \textbf{31.97} \gain{15.66}\\
\bottomrule
\end{tabular}
}
\end{table}

\subsubsection{Ablation Study on the Influence of Hyperparameter $\beta$}
\label{sec:ablation_beta}

\textbf{Experimental Setup and the Role of $\beta$.} 
To investigate the sensitivity of our proposed method to its hyperparameters, we conduct an ablation study on the regularization coefficient $\beta$. We vary its value across a wide range of $[0.25, 1.75]$ to observe its impact on model performance. As defined in the standard RLHF objective, $\beta$ controls the trade-off between maximizing the reward and maintaining proximity to the reference policy. In the \ours{} framework, its role is to modulate the intensity of the policy update based on the reward signal $r(\mathbf{x}, \mathbf{y})$, as formulated in our practical update target in \eqref{eq:practical_target}.A smaller $\beta$ amplifies the reward signal, leading to more aggressive updates, while a larger $\beta$ dampens it, resulting in more conservative updates.


\textbf{Analysis and Conclusions.}
The results of our study are presented in Figure~\ref{fig:beta_ablation}, which illustrates the cumulative accuracy over 10 conversational turns for each tested $\beta$ value. A key observation is that while the initial learning trajectories vary---with smaller $\beta$ values often yielding a steeper initial performance gain---all configurations converge to a similar final accuracy. This convergence can be attributed to the iterative nature of the multi-turn interaction. Although $\beta$ adjusts the magnitude of each corrective step, the consistent directional feedback provided by the reward signal ensures that the model is always guided towards an improved policy. Consequently, over a sufficient number of turns, even a series of conservative updates can accumulate to achieve the correct solution.

From this analysis, we draw two key conclusions. First, for tasks with definitive solutions, such as mathematical reasoning, different search strategies---ranging from aggressive to conservative---are all highly likely to converge to the correct solution given adequate opportunities for self-correction. Second, this study underscores the \textbf{robustness of the \ours{} framework}. The model's final performance demonstrates low sensitivity to the choice of $\beta$ across a wide operational range, indicating that \ours{} can achieve stable and effective results without extensive hyperparameter tuning.

\begin{figure}[ht]
    \centering
    \includegraphics[width=0.8\textwidth]{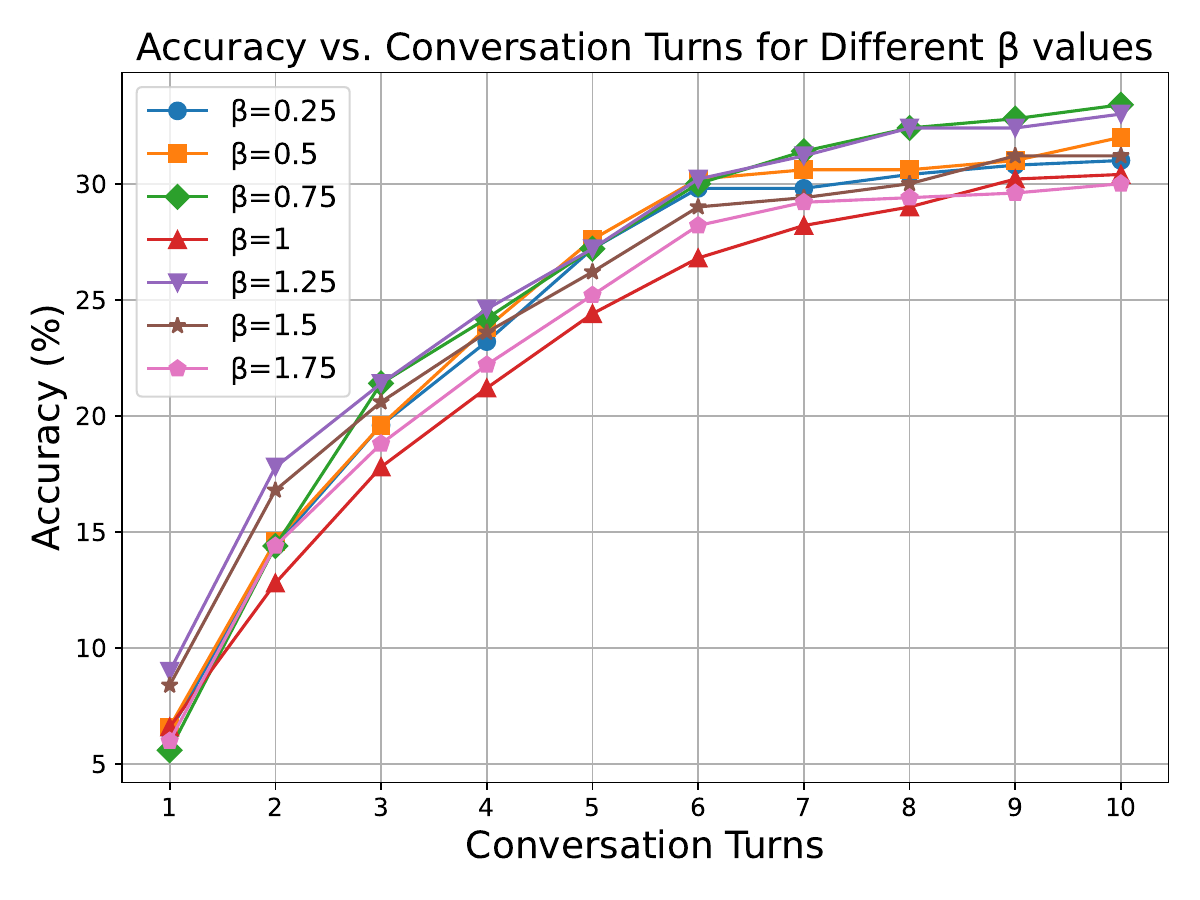} 
    \vspace{-1mm}
    \caption{
        \textbf{Ablation study of the hyperparameter $\beta$ on the MATH dataset.} The figure illustrates the cumulative accuracy over 10 conversational turns for different values of $\beta$, ranging from 0.25 to 1.75.
    }
    \label{fig:beta_ablation}
    \vspace{-1mm}
\end{figure}

\section{Case Study} \label{sec:case_study}

\begin{tcolorbox}[title={\textbf{Qwen3-0.8B, MATH dataset}},
colback=whitesmoke, colframe=darksalmon,  boxrule=2pt, arc=0mm]
{\scriptsize
\begin{lstlisting}[style=mystyle]
Question: Convert the point (0,3) in rectangular coordinates to polar coordinates. 
Enter your answer in the...

True answer: \left( 3, \frac{\pi}{2} \right)

# --- Turn 1 ---
    Baseline answer: 'Okay, so I need to convert the point (0, 3) from rectangular 
    coordinates to polar coordinates. Hmm, let me remember how to do this....
    So putting it all together, the polar coordinates should be (3, pi/2). 
    Let me just'
    
    ROSA answer: 'Okay, so I need to convert the point (0, 3) from rectangular
    coordinates to polar coordinates. Hmm, let me remember how to do this....
    So putting it all together, the polar coordinates should be (3, pi/2). 
    Let me just'


# --- Turn 2 ---

    Baseline answer: 'Okay, let me try to figure this out again. The original 
    point is (0, 3). In rectangular coordinates....Therefore, after checking,
    I think the correct answer is (3, pi/2). However, the user initially 
    thought it was wrong, but after rethinking, it's correct.'
    
    ROSA answer: 'Okay, so I got confused earlier. The original point is (0, 3)
    . If I use polar coordinates,....Alternatively, if someone thought that 
    theta is measured as the angle between the x-axis and the'


# --- Turn 3 ---
    Baseline answer: 'Okay, so the user initially thought that converting 
    (0,3) to polar coordinates woul....Is there any possibility that theta 
    could be negative? No, because theta is always between 0 and 2pi,
    and since (0,3) is in the first quadrant'
    
    ROSA answer: 'Okay, so the user initially thought the answer was 
    (3, pi/2) but the second time they said "Wrong answer....
    **Final Answer**
    The polar coordinates of the point (0, 3) 
    are \boxed{(3, \frac{\pi}{2})}.'

\end{lstlisting}
}
\end{tcolorbox}

\end{document}